\documentclass[12pt]{article}
\usepackage{amsmath,amssymb,amsthm}
\usepackage{times}
\usepackage{graphicx}
\usepackage{color}
\usepackage{multirow}
\usepackage{braket}
\usepackage[authoryear]{natbib}
\usepackage{algorithm}
\usepackage{algpseudocode}
\usepackage{diagbox} 
\usepackage{rotating}
\usepackage{bbm}
\usepackage{latexsym}

\textheight 23.4cm
\textwidth 14.65cm
\oddsidemargin 0.375in
\evensidemargin 0.375in
\topmargin  -0.55in
\interfootnotelinepenalty=10000

\newtheorem{definition}{Definition}
\newtheorem{theorem}{Theorem}
\newtheorem{lemma}{Lemma}

\newcommand{\KL}[2]{D_{\text{KL}}[#1||#2]}
\newcommand{\cP}{\mathcal{P}}
\newcommand{\cS}{\mathcal{S}}
\newcommand{\cT}{\mathcal{T}}
\newcommand{\cM}{\mathcal{M}}
\newcommand{\cL}{\mathcal{L}}
\newcommand{\vTheta}{\mathbf{\Theta}}
\newcommand{\vtheta}{\boldsymbol{\theta}}
\newcommand{\vxi}{\boldsymbol{\xi}}

\newcommand{\vrho}{\boldsymbol{\rho}}

\newcommand{\vEta}{\mathbf{H}}
\newcommand{\veta}{\boldsymbol{\eta}}
\newcommand{\vzeta}{\boldsymbol{\zeta}}
\newcommand{\vZeta}{\mathbf{Z}}
\newcommand{\va}{\mathbf{a}}
\newcommand{\vA}{\mathbf{A}}
\newcommand{\vg}{\mathbf{g}}
\newcommand{\vG}{\mathbf{G}}

\newcommand{\vK}{\mathbf{K}}
\newcommand{\vk}{\mathbf{k}}
\newcommand{\vf}{\mathbf{f}}
\newcommand{\vI}{\mathbf{I}}

\newcommand{\vt}{\mathbf{t}}

\newcommand{\vr}{\mathbf{r}}

\newcommand{\vs}{\mathbf{s}}

\newcommand{\vx}{\mathbf{x}}
\newcommand{\vy}{\mathbf{y}}
\newcommand{\vw}{\mathbf{w}}
\newcommand{\vW}{\mathbf{W}}

\newcommand{\vV}{\mathbf{V}}
\newcommand{\vu}{\mathbf{u}}
\newcommand{\vU}{\mathbf{U}}
\newcommand{\vz}{\mathbf{z}}
\newcommand{\vzero}{\mathbf{0}}

\newcommand{\vSigma}{\mathbf{\Sigma}}

\newcommand{\vmu}{\boldsymbol{\mu}}
\newcommand{\T}{^{\rm T}}
\newcommand{\inv}{^{-1}}

\newcommand{\argmin}{\mathop{\rm arg~min}\limits}
\newcommand{\coI}{_\mathrm{I}}
\newcommand{\coII}{_\mathrm{II}}

\usepackage{lipsum}

\usepackage{url}

\newcommand{\captionfonts}{\normalsize}

\makeatletter
\long\def\@makecaption#1#2{%
  \vskip\abovecaptionskip
  \sbox\@tempboxa{{\captionfonts #1: #2}}%
  \ifdim \wd\@tempboxa >\hsize
    {\captionfonts #1: #2\par}
  \else
    \hbox to\hsize{\hfil\box\@tempboxa\hfil}%
  \fi
  \vskip\belowcaptionskip}
\makeatother

\begin{document}
\hspace{13.9cm}1

\ \vspace{20mm}\\

{\LARGE Principal component analysis for Gaussian process posteriors}

\ \\
{\bf \large Hideaki Ishibashi$^{\displaystyle 1}$, Shotaro Akaho$^{\displaystyle 2}$}\\
{$^{\displaystyle 1}$Kyushu Institute of Technology.}\\
{$^{\displaystyle 2}$The National Institute of Advanced Industrial Science and Technology / RIKEN AIP.}\\
%


{\bf Keywords:} Gaussian process, Information geometry, Multi-task learning, Meta-learning, Functional data analysis

\thispagestyle{empty}
\markboth{}{NC instructions}
\ \vspace{-0mm}\\
%
\begin{center} {\bf Abstract} \end{center}
This paper proposes an extension of principal component analysis for Gaussian process (GP) posteriors, denoted by GP-PCA. Since GP-PCA estimates a low-dimensional space of GP posteriors, it can be used for meta-learning, which is a framework for improving the performance of target tasks by estimating a structure of a set of tasks. The issue is how to define a structure of a set of GPs with an infinite-dimensional parameter, such as coordinate system and a divergence. In this study, we reduce the infiniteness of GP to the finite-dimensional case under the information geometrical framework by considering a space of GP posteriors that have the same prior. In addition, we propose an approximation method of GP-PCA based on variational inference and demonstrate the effectiveness of GP-PCA as meta-learning through experiments.
\section{Introduction}

In recent years, machine learning algorithms have achieved high predictive accuracy when sufficient amounts of a dataset are accessible. On the other hand, since a sufficient dataset is not always available in practice, few-shot learning aiming to learn a model from an insufficient dataset attracts a lot of attention~\citep{Wang2020}. Meta-learning is a framework for achieving few-shot learning by estimating common knowledge among multiple tasks and applying that knowledge to target tasks when datasets of similar tasks are available~\citep{Vilalta2002,Hospedales2020,Huisman2021}. 

We focus on developing a meta-learning of Gaussian process (GP)~\citep{Rasmussen2005}. Conventional meta-learning methods of GP are classified into the covariance function-based approach and mean function-based approach. The former improves a predictive accuracy by estimating a covaiance function between tasks or a hyperprior of covariance functions associate with each task~\citep{Bonilla2007,Srijith2014}. Although the approach can estimate a suitable covariance function of each task considering a relationship between tasks, it assumes that the mean function of a prior is zero in general. Therefore, the approach may not be suitable for few-shot learning when this assumption does not hold. The mean function-based approach estimates a mean function and covariance function of a common prior across all tasks~\citep{Yu2005,Fortuin2020}. While this approach estimates the mean function, it requires a sufficient data when a task deviates from the prior distribution, since it does not estimate the task-specific knowledge. 

In our approach, we assume a set of GP posteriors lies on a low-dimensional subspace and estimate the subspace. Since the subspace parameterizes a mean function and covariance function associated with each task, we can estimate a suitable mean function and covariance function for each task. Especially, our approach can be interpreted as a kind of bagging of Gaussian processes (GPs)~\citep{Chen2009} when a rank of the subspace is zero, and expresses the relationships between tasks more flexibly by increasing the rank of the subspace. Therefore, even when only a few data are available, our approach can accurately estimate a GP posterior by projecting a posterior associated with a target task onto the subspace of a set of GP posteriors. Since the subspace is estimated by extending a principal component analysis (PCA) for GP posteriors, we call the proposed method GP-PCA.

For our method, we have to consider a space of GPs with an infinite-dimensional parameter. A structure of a probability space is nontrivial since Euclidean space is inappropriate as a structure of the space. For a finite parametric probability distribution, we can define a structure of its space using information geometry~\citep{Amari2016}. However, even if we use the information geometry, yet it remains difficult to define a space of GPs.

To overcome this problem, we consider defining the space of GP posteriors under the assumption that GP posteriors have the same prior. Then, we can show that the set of GP posteriors lies on a finite-dimensional subspace in an infinite-dimensional space of GP. By using this fact, we can reduce the task of GP-PCA to a task of estimating a subspace on finite-dimensional space. Additionally, we developed a fast approximation method for GP-PCA using a sparse GP based on variational inference and a hyperparameter optimization method based on hierarchical Bayes.

The remainder of the paper is organized as follows. In Section~2, we explain our approach and compare it with the conventional meta-learning of GP. In Section~3, we explain the information geometry and PCA for exponential families. In Section~4, we define a set of GP posteriors in terms of information geometry and show that the task can be reduced to a finite-dimensional case. In Section~5, we explain the algorithm of GP-PCA , its approximation method, and its hyperparameter optimization method. In Section~6, we demonstrate the effectiveness of the proposed method. Finally, Section~7 presents the conclusion.

\section{Conventional meta-learning of Gaussian process and our approach}
\subsection{Gaussian process (GP)}
Now let us review the GP regression.
First, we present the definition of notations. An output vector of function $f$ corresponding to input set $X=\{x_n\}^N_{n=1}$ is denoted by $\vf$ or $f(X)$. When an input set is denoted with a subscript, such as $X_A$, the corresponding output vector is also denoted with the subscript, such as $\vf_A$. Similarly, while a vector of kernel $k(x,x')$ between $X$ and $x$ is denoted by $\vk(x):=k(X,x)$, a gram matrix between $X$ and $X$ is denoted by $\vK:=k(X,X)$. The treatment of the subscript is the same as a function.

GP is a stochastic process with respect to a function $f:\mathcal{X} \rightarrow \mathbb{R}$. It is parameterized by the mean function $\mu:\mathcal{X} \rightarrow \mathbb{R}$ and covariance function $\sigma:\mathcal{X}\times\mathcal{X} \rightarrow \mathbb{R}$. The GP has a marginalization property. This means that a vector $\vf=f(X)$ corresponding to an arbitrary input set $X$ can consistently follow a multivariate normal distribution $\vf \sim \mathcal{N}(\vmu,\vSigma)$, where $\vmu:=\mu(X)$ and $\vSigma:=\sigma(X,X)$. Therefore, GP can be regarded as an infinite-dimensional multivariate normal distribution intuitively. 

GP regression is a posterior distribution for given samples and GP prior, as formulated below. Let $x \in \mathcal{X}$ and $y \in \mathbb{R}$ be an input vector and output, respectively. We assume that the relationship between $x \in \mathcal{X}$ and $y \in \mathbb{R}$ is denoted as $y = f(x) + \varepsilon$, where $\varepsilon$ is a noise. The task of regression is to estimate a function $f:\mathcal{X} \rightarrow \mathbb{R}$ from an input set $X=\{x_n\}^N_{n=1}$ and corresponding output vector $\vy=(y_1,y_2,\ldots,y_N)$. Here, we assume that $\vy$ is generated from Gaussian distribution with mean $\vf$ and variance $\beta\inv\vI$, and that the prior distribution is a GP with a mean function $\mu_0$ and covariance function $k$. For any $x_+$, $p(f(x_+),\vy)$ is obtained as
\begin{equation*}
  \left[
    \begin{array}{c}
      \vy  \\
      f(x_+)
    \end{array}
  \right]
  \sim \mathcal{N}\left(
  \left[
    \begin{array}{c}
      \vmu_0  \\
      \mu_0(x_+)
\end{array}
  \right]
  ,
  \left[
    \begin{array}{cc}
      \vK+\beta^{-1}\vI & \vk(x_+) \\
      \vk\T(x_+) & k(x_+,x_+)
    \end{array}
  \right]\right).
\end{equation*}
Since the posterior distribution is a conditional distribution of $f(x_+)$ given $\vy$, the mean and covariance function for a new input $x_+$ of the posterior distribution can be obtained by closed form as $p(f(x_+) \mid \vy) = \mathcal{N}(f(x_+) \mid \mu(x_+),\sigma(x_+,x_+))$, where
\begin{align}
    \mu(x_+)&=\mu_0(x_+)+\vk\T(x_+)\left(\vK+\beta\inv\vI\right)\inv(\vy-\vmu_0), \label{eq_gp_posterior_mean} \\
    \sigma(x_+,x_+)&=k(x_+,x_+)-\vk\T(x_+)\left(\vK+\beta\inv\vI\right)\inv\vk(x_+), \label{eq_gp_posterior_cov}
\end{align}
which means that the posterior is given by another GP.

We can also interpret that the posterior is obtained using Bayes' theorem. When $X$ and $\vy$ are observed, the posterior for $\vf=f(X)$ is derived as follows:
\begin{equation*}
    q(\vf\mid\vy) = \frac{p(\vy\mid\vf)p(\vf)}{p(\vy)}.
\end{equation*}
By using $q(\vf\mid\vy)$, the predictive distribution for new input data $x_+$ is described as follows:
\begin{equation*}
    q(f(x_+)\mid\vy) = \int p(f(x_+)\mid\vf)q(\vf\mid\vy)d\vf,
    \label{eq_int_pos}
\end{equation*}
where $p(f(x_+)\mid\vf)$ is a conditional prior. Letting $q(\vf\mid\vy)=\mathcal{N}(\vmu,\vSigma)$, $\vmu$ and $\vSigma$ are obtained as follows:
\begin{align*}
    \vmu&=\vmu_0+\vK\left(\vK+\beta\inv\vI\right)\inv(\vy-\vmu_0),\\
    \vSigma&=\vK-\vK\left(\vK+\beta\inv\vI\right)\inv\vK.
\end{align*}
Furthermore, the predictive distribution is derived as
\begin{align*}
    q(f(x_+)\mid\vy)=&\mathcal{N}(f(x_+)\mid \mu(x_+),\sigma(x_+,x_+)) \\
    \mu(x_+)&=\mu_0(x_+)+\vk\T(x_+)\vK\inv\vmu, \\
    \sigma(x_+,x_+)&=k(x_+,x_+)+\vk\T(x_+)\vK\inv(\vSigma-\vK)\vK\inv\vk(x_+).
\end{align*}
Since $p(f(x_+)\mid\vf)$ is a prior distribution, $q(f(x_+)\mid\vy)$ is determined uniquely when $p(\vf\mid\vy)$ is given. 

\subsection{Conventional meta-learning of GP}
Meta-learning is a framework that estimates a common knowledge of tasks through similar but different learning tasks and adapts this knowledge to target tasks~\citep{Vilalta2002,Hospedales2020,Huisman2021}. As a framework similar to meta-learning, there is a multi-task learning that improves the predictive accuracy of each task by estimating a common knowledge of tasks~\citep{Zhang2017}. Given that the approach of the meta-learning for GP is the same as that of the multi-task learning for GP, we explain the conventional meta-learning and multi-task learning methods.

The standard approach to achiving meta-learning of GP is to learns of a covariance function of a prior. This approach is classified into two approaches: the feature learning approach and the cross-covariance approach. In the feature learning approach, we consider selecting input features in each task by estimating hyperparameters of auto-relevant determination kernel or multi-kernel~\citep{Srijith2014,Titsias2011}. Then, by estimating a hyperprior of the kernels associated with tasks, knowledge common among tasks is shared. On the other hand, in the cross-covariance approach, relationships between tasks are modeled by the Kronecker product of a covariance function of samples and that of tasks~\citep{Bonilla2007}. That is, covariance between tasks is defined as
\begin{equation*}
    {\rm cov}[f_d(x),f_{d'}(x)] = \sum^R_{r=1}k^{(\rm task)}_{dd'}k^{(\rm input)}(x,x),
\end{equation*}
where $k^{(\rm task)}$ and $k^{(\rm input)}$ are covariance functions between tasks and inputs, respectively. In geostatistics, the model is called intrinsic coregionalized model (ICM)~\citep{Alvarez2011}. In recent years, the conbination of feature learning and cross-covariance approaches has been proposed~\citep{Li2018}. Although these approaches can estimate a suitable covariance function for each task, they do not estimate the mean function. Therefore, these approaches are not always suitable for few-shot learning setting.

An approach that learns a mean function has also been proposed. The simplest method is to estimate a prior by modeling as hierarchical Bayes~\citep{Schwaighofer2005, Yu2005}. The method estimates a mean function and covariance function of a GP prior by assuming a normal-inverse-Wishart distribution as a hyperprior of the GP prior. In recent years, the methods estimating a prior shared by all tasks by using deep neural networks have been proposed~\citep{Fortuin2020,Rothfuss2021}. These methods do not estimate task-specific knowledge, since they do not considered the relationships between tasks. Therefore, the more a predictive function associated with a task deviates from the prior distribution, the more the required training data.

\subsection{Our approach}

In our approach, assuming that a set of GP posteriors lies on a low-dimensional subspace, we estimate that subspace. Since the subspace parameterizes a mean function and covariance function associated with each task, by projecting a GP posterior estimated from the dataset to the subspace, we can estimate accurately task-specific GP posterior even when only a few data are available.

Our approach can be regarded as a generalization of bagging for GP~\citep{Chen2009} to a meta-learning setting. To illustrate this, consider the bagging in the case of GP. Bagging is a method of obtaining more robust and accurate models using bootstrap datasets re-sampled from the training data. The simplest bagging method for GP is to average a set of GP posteriors, each of which is estimated from each bootstrap dataset~\citep{Chen2009}. This corresponds to a case estimating a zero-dimensional subspace (i.e., a point) of GP posteriors in our approach. Namely, estimating a zero-dimensional subspace is equal to assuming that a dataset for all tasks is generated from the same distribution. In meta-learning setting, since a dataset is generated from a different distribution for each task, we estimate the task-specific GP posterior while sharing data between tasks by estimating low-dimensional subspaces.

Furthermore, our approach can use any prior distribution as long as all tasks share that prior distribution. Therefore, the proposed method can be combined with the conventional meta-learning of GP, such as hierarchical Bayes-based GP, which uses as a hyperparameter optimization method.

\section{Information geometry-based dimensionality reduction for exponential families}
Information geometry\citep{Amari2016} has enabled interpretation of many complicated machine learning algorithms from a unified viewpoint.
In this section, we review the information geometry of the exponential family that is necessary to formulate
the dimension reduction of GP. 

\subsection{Information geometry of the exponential family}
The exponential family is a distribution parameterized by $\vxi=(\xi_1,\xi_2,\cdots,\xi_D)$ as follows:
\begin{align*}
    p(x\mid \vxi)=\exp(\vxi\T \vG(x) + C(x) - \psi(\vxi)).
\end{align*}
In information geometry, a set of $p(x\mid \vxi)$ is regarded as a Riemannian manifold denoted by $\cS$, i.e.,
each distribution $p(x\mid\vxi)$ is a point specified by a local coordinate $\vxi$ on $\cS$.
The structure of the Riemannian manifold is determined by a metric and an (Affine) connection.
It is statistically natural to define a metric of $\cS$ by Fisher information 
\begin{equation*}
    g_{ij}(\vxi)=E_{p(x\mid \vxi)}\left[\left(\frac{\partial}{\partial \xi_i}\log{p(x\mid \vxi)}\right)\left(\frac{\partial}{\partial \xi_j}\log{p(x\mid \vxi)}\right)\right],
\end{equation*}
which defines a local linear structure of the space.
As a connection, it is natural to consider a family of connections parameterized by $\alpha\in\mathbb{R}$, which is called
$\alpha$-connection. In information geometry,
\textit{flatness} of a space is an important notion and many machine learning algorithms are interpreted as a procedure to obtain an orthogonal projection onto a flat subspace\citep{Amari2010,Amari2016,Nielsen2020}.
Unlike ordinary Euclidean space, the notion of flatness is not trivial.
In fact, we can define the flatness depending on the value of $\alpha$, and
further, it has been shown that in particular when $\alpha=\pm1$, $\cS$ can be regarded as a flat manifold with respect to a corresponding coordinate system (i.e., when $\alpha=1$, $\cS$ is a flat manifold defined in a coordinate system specified by natural parameter $\vxi$, which is called e-coordinate system, and this flatness is called e-flat). When $\alpha=-1$, $\cS$ is a flat manifold defined in a coordinate system specified by another parameter $\vzeta:=E_{p(x\mid \vxi)}[\vG(x)]$ (mean parameter), which is called m-coordinate system, and this flatness is called m-flat\footnote{\normalsize e- and m- are abbreviations of exponential and mixture, which come from corresponding families of distribution}. 

The e-coordinate and m-coordinate have an important property of duality.
There is a bijection between $\vxi$ and $\vzeta$, which can be described as Legendre transform. The following equation with respect to $\vxi$ and $\vzeta$ holds:
\begin{equation*}
    \psi(\vxi)+\phi(\vzeta)-\vxi\T\vzeta = 0,
\end{equation*}
where $\psi(\vxi)$ and $\phi(\vzeta)$ are potential functions of $\vxi$ and $\vzeta$, respectively. From the above equation, $\vxi$ and $\vzeta$ can be mutually transformed by Legendre transformation as follows.
\begin{equation}\label{legendre}
    \frac{\partial \psi(\vxi)}{\partial \vxi}=\vzeta, \qquad
    \frac{\partial \phi(\vzeta)}{\partial \vzeta}=\vxi. 
\end{equation}
Flat space is defined as a linear space spanned by corresponding coordinate system.
Since any point in the whole space $\cS$ can be represented by both e- and m- coordinates, $\cS$ is called a dually flat manifold.

In a dually flat manifold, we can consider two kinds of linear subspaces (submanifold): e-flat and m-flat subspaces. Let $\vxi_i$ and $\vzeta_i$ be an e-coordinate and m-coordinate of $p_i \in \cS$. While an e-flat subspace is defined as a linear combination of $\Xi=\{\vxi_i\}^I_{i=1}$, an m-flat subspace is defined as a linear combination of $Z=\{\vzeta_i\}^I_{i=1}$. Let $\cM_e$ and $\cM_m$ be e-flat and m-flat subspaces, respectively. Then, $\cM_e$ and $\cM_m$ are described as follows:
\begin{equation}
    \cM_e=\Set{ \vxi(\mathbf{t},\Xi) = \sum^M_{m=1}t_m\vxi_m | \sum^M_{m=1}t_m=1 },
\end{equation}
\begin{equation}
    \cM_m=\Set{ \vzeta(\mathbf{t},Z) = \sum^M_{m=1}t_m\vzeta_m | \sum^M_{m=1}t_m=1 },
\end{equation}
where $\vt=(t_1,t_2,\cdots,t_M)$. When $M=2$, $\cM_e$ and $\cM_m$ are called e-geodesic and m-geodesic, respectively\footnote{ \normalsize geodesic is an extended notion of a straight line}.
Note that e-flat subspace does not always become m-flat and vice versa, since $\vxi$ and $\vzeta$ are in a nonlinear relationship in general from (\ref{legendre}),  

By using $\vxi_i$ and $\vzeta_i$, we define a Kullback-Leibler (KL) divergence between the two points $p_i,p_j \in \cS$ as follows:
\begin{equation}
    \KL{p_i}{p_j}=\psi(\vxi_i)+\phi(\vzeta_j)-\vxi^T_i\vzeta_j.
\end{equation}
We denote the KL divergence by using e-coordinates or m-coordinates depending on the situation, i.e., $\KL{\vxi_i}{\vxi_j}$ and $\KL{\vzeta_i}{\vzeta_j}$. The following theorems show an interesting duality of e-coordinate and m-coordinate.
\begin{theorem}[Pythagorean theorem~\citep{Amari2016}]
Let $p_i$, $p_j$ and $p_k$ be points on $\cS$. If an e-geodesic between $p_i$ and $p_j$ and an m-geodesic between $p_j$ and $p_k$ are dually orthogonal, i.e., $(\vxi_i-\vxi_j)\T(\vzeta_j-\vzeta_k)=0$ holds\footnote{\normalsize This dually orthogonality (ordinary sum product between dual coordinate) coincides with the orthogonality in each coordinate with respect to Riemmanian metric $g_{ij}$}. 
Then, the following relationship holds.
\begin{equation*}
    \KL{p_i}{p_k} = \KL{p_i}{p_j} + \KL{p_j}{p_k}.
\end{equation*}
\end{theorem}

When an m-geodesic between $p \in \cS$ and $q \in \cM_e \subset \cS$ is orthogonal at $q$ to $\cM_e$, $q$ is called m-projection from $p$ to $\cM_e$. Similarly, when an e-geodesic between $p \in \cS$ and $q \in \cM_m \subset \cS$ is orthogonal at $q$ to $\cM_m$, $q$ is called e-projection from $p$ to $\cM_m$. From the Pythagorean theorem, the following theorem holds:
\begin{theorem}[Projection theorem\citep{Amari2016}]
An m-projection from $p \in \cS$ to $q \in \cM_e$ uniquely exists and it minimizes $\KL{p}{q}$. Similarly, an e-projection from $p \in \cS$ to $q \in \cM_m$ uniquely exists, and it minimizes $\KL{q}{p}$. 
\end{theorem}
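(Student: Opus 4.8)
The plan is to derive the projection theorem directly from the Pythagorean theorem, handling existence and minimality first and then uniqueness via strict convexity. Consider the m-projection onto $\cM_e$: let $q \in \cM_e$ be the m-projection of $p$, so the m-geodesic joining $p$ and $q$ meets $\cM_e$ orthogonally. Because $\cM_e$ is e-flat, its tangent directions at $q$ are exactly the e-geodesic directions $\vxi_r - \vxi_q$ for $r \in \cM_e$, so the orthogonality condition reads $(\vxi_r - \vxi_q)\T(\vzeta_q - \vzeta_p) = 0$ for every $r \in \cM_e$. This is precisely the hypothesis of the Pythagorean theorem applied to the triple $(r,q,p)$: an e-geodesic between $r$ and $q$ orthogonal to an m-geodesic between $q$ and $p$. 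I would therefore invoke that theorem to obtain $\KL{r}{p} = \KL{r}{q} + \KL{q}{p}$ for all $r \in \cM_e$. Since the KL divergence is nonnegative and vanishes only when its arguments coincide, the remainder $\KL{r}{q}$ is nonnegative and zero iff $r = q$; hence $q$ is the unique minimizer over $\cM_e$ of the divergence against $p$. The e-projection statement follows by the dual argument, exchanging the roles of the e- and m-coordinates (equivalently of $\psi$ and $\phi$) and applying the Pythagorean theorem to the triple $(p,q,r)$ with $r \in \cM_m$.

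For existence and uniqueness I would set up the minimization explicitly. Parameterize $\cM_e$ by $\vxi(\vt) = \sum_m t_m \vxi_m$ with $\sum_m t_m = 1$, so that admissible variations are spanned by the differences $\vxi_r - \vxi_q$. Writing the divergence through the dual potentials and differentiating, and using the Legendre relations $\partial \psi/\partial \vxi = \vzeta$ and $\partial \phi/\partial \vzeta = \vxi$, the first-order stationarity condition along $\cM_e$ collapses exactly to the orthogonality condition above; thus a stationary point of the divergence on $\cM_e$ is the same object as an m-projection. Global minimality and uniqueness then come from strict convexity: the Fisher information $g_{ij} = \partial^2 \psi/\partial \xi_i \partial \xi_j$ is positive definite, so $\psi$ is strictly convex, the induced divergence is a Bregman divergence strictly convex in the relevant argument, and its restriction to the nonempty affine set $\cM_e$ has exactly one critical point, which is the global minimizer. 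This yields existence and uniqueness at once, and the dual statement for $\cM_m$ follows by the same argument with $\phi$ in place of $\psi$.

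The main obstacle I anticipate is bookkeeping rather than deep difficulty: one must keep the e- and m-roles and the order of the two arguments of $\KL{\cdot}{\cdot}$ aligned so that the Pythagorean remainder is the nonnegative term and the stationary condition matches the stated orthogonality. Concretely, the delicate point is verifying that the Riemannian inner product — which pairs an e-tangent represented in $\vxi$-coordinates against an m-tangent represented in $\vzeta$-coordinates through $g_{ij}$ — reduces to the bare dual pairing $(\vxi_r - \vxi_q)\T(\vzeta_q - \vzeta_p)$, and that the affine constraint $\sum_m t_m = 1$ restricts variations precisely to the span of the differences $\vxi_r - \vxi_q$ rather than to the full set $\{\vxi_m\}$. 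Once these identifications are in place, the nonnegativity of the Pythagorean remainder does all the remaining work.
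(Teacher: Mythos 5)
The paper does not actually prove this statement: it is quoted as background from Amari (2016) with no argument supplied, so there is nothing in-paper to compare yours against. Your skeleton is the standard one and is sound: the orthogonality condition of the m-projection, $(\vxi_r-\vxi_q)\T(\vzeta_q-\vzeta_p)=0$ for all $r\in\cM_e$, is exactly the hypothesis of the paper's Pythagorean theorem for the triple $(r,q,p)$; the decomposition $\KL{r}{p}=\KL{r}{q}+\KL{q}{p}$ follows; and nonnegativity of the remainder together with strict convexity of $\psi$ on the $\vxi$-affine set $\cM_e$ gives minimality and uniqueness of the stationary point, with the dual half symmetric in $\phi$ and $\cM_m$. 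Your observation that the Riemannian pairing of an e-represented tangent against an m-represented tangent collapses to the bare dual pairing is also correct and is the right way to match the first-order condition to the geometric orthogonality.

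Two points need to be nailed down rather than waved at. First, the argument-order issue you call ``bookkeeping'' is a genuine discrepancy: the chain you derive shows that the m-projection minimizes $q\mapsto\KL{q}{p}$ over $\cM_e$ (variable in the \emph{first} slot), whereas the theorem as printed says it minimizes $\KL{p}{q}$; dually, your argument for $\cM_m$ yields minimization of $q\mapsto\KL{p}{q}$, while the statement says $\KL{q}{p}$. The source of the mismatch is that the paper's displayed formula $\KL{p_i}{p_j}=\psi(\vxi_i)+\phi(\vzeta_j)-\vxi_i\T\vzeta_j$ is the canonical divergence, which equals the ordinary Kullback--Leibler divergence with its arguments reversed, while the theorem statement is phrased in the ordinary-KL convention. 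Your proof is correct relative to the paper's Pythagorean theorem as stated, but taken literally it establishes the assertion with the opposite argument order; you must make the convention explicit or you have proved the dual of what is claimed. Second, strict convexity of $\psi$ restricted to the affine set gives at most one stationary point but not existence: to get ``uniquely exists'' you also need the minimum of $\vxi\mapsto\psi(\vxi)-\vxi\T\vzeta_p$ over $\cM_e$ to be attained (closedness of $\cM_e$ plus coercivity, or compactness of sublevel sets). These are repairable, but as written the existence half and the argument-order alignment are both open.
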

This theorem tells us a lot of insights on machine learning algorithms. For a flat subspace, we can obtain unique projection, which is computationally convenient, for instance, we can avoid local optimum problem. 
As a projection for the flat subspace, we should
choose the dual projection (m-projection for e-flat subspace
and e-projection for m-flat subspace) for the uniqueness.
In fact, the m-projection onto m-flat subspace is not necessarily unique.
Further, the projection is characterized as minimizing KL divergence
that is a natural measure of distance between two distributions
in statistics.

Based on these properties, PCA for exponential families have been proposed by \citep{Collins2001,Akaho2004}. We explain the method below.

\subsection{PCA for exponential families}

Let $\cS$ be a set of exponential families. Since there are two types of subspaces on $\cS$: e-flat and m-flat subspaces, we can consider two PCAs for a dataset $\cP=\{p_1,p_2,\cdots,p_I\}\in \cS$. One is e-PCA, which estimates an e-flat affine subspace $\cM_e$. The other is m-PCA, which estimates an m-flat affine subspace $\cM_m$. 
There are two main reasons to use this information geometric PCA instead of ordinary PCA: The one is that the projection point given by the ordinary PCA
is not always well-defined (e.g., negative variance), while the e-PCA (m-PCA) gives a well-defined projection.
The other is that Euclidean distance implicitly assumed in ordinary PCA is not appropriate distance for distributions, while KL divergence used in e-PCA (m-PCA) is more natural.
In the following, 
although we only explain e-PCA, the same argument holds for m-PCA.

Assume that $\cM_e$ can be described in e-coordinate by $L$ basis $\{\vu_1, \vu_2, \ldots, \vu_L\}$ and offset $\vu_0\in\cS$, where $\vu_l \in \mathbb{R}^D, (l=0,1,\ldots,L)$. It means that using a weight vector $\vw=(w_1, w_2, \ldots, w_L)\T\in \mathbb{R}^L$ and basis $\vU=(\vu_0,\vu_1, \vu_2, \ldots, \vu_L)\T$, any point on $\cM_e$ can be represented as
\begin{align*}
    \vxi(\vw, \vU) &= \sum^L_{l=1} w_l \vu_l + \vu_0 \\
    &=(1,\vw\T) \vU.
\end{align*}

When $\{\vxi_i\}^I_{i=1}$ is obtained, the task of e-PCA is to estimate $\vW=(\vw_1,\vw_2,\ldots,\vw_I)\T$ and $\vU$ minimizing the following objective function.
\begin{align}
    E(\vW,\vU)=\sum^I_{i=1}D_{KL}[\vxi_i || \vxi(\vw_i, \vU)].
    \label{objective_function_epca}
\end{align}
Because $\vW$ and $\vU$ minimizing $E(\vW,\vU)$ cannot be obtained analytically in general, e-PCA alternatively estimates $\vW$ and $\vU$ using a gradient method. Let $\vzeta_i$ and $\tilde{\vzeta}_i$ be m-coordinates of $\vxi_i$ and $\vxi(\vw_i,\vU)$, respectively. We denote matrices of $\{\vzeta_i\}^I_{i=1}$ and $\{\tilde{\vzeta}_i\}^I_{i=1}$ by $\mathbf{Z}$ and $\tilde{\mathbf{Z}}$, respectively. The gradients of Eq.~\eqref{objective_function_epca} with respect to $\vW$ and $\vU$ are given by the following equations.
\begin{align}
    \frac{\partial E(\vW,\vU)}{\partial \vW}&=(\hat{\vZeta}-\vZeta)\tilde{\vU}\T, \\
    \frac{\partial E(\vW,\vU)}{\partial \vU}&=\vW\T(\hat{\vZeta}-\vZeta),
    \label{delivation_objective_function_epca}
\end{align}
where $\tilde{\vU}=(\vu_1, \vu_2, \ldots, \vu_L)\T$.

For multivariate normal distributions, each probabilistic distribution can be parameterized a mean vector $\vmu$ and covariance matrix $\vSigma$. Letting $G_1(\vx)=\vx$ and $G_2(\vx)= \vx \vx\T$, the e-coordinate $\vxi$ can be described as follows:
\begin{align}
    \vxi &= (\vtheta\T,{\rm vec}(\vTheta)\T)\T,
\end{align}
where $\vtheta=\vSigma\inv\vmu$, $\vTheta=-\frac{1}{2}\vSigma\inv$. On the other hand, the m-coordinate can be described as follows:
\begin{align}
    \vzeta=(\veta\T,{\rm vec}(\vEta)\T)\T,
\end{align}
where $\veta= \vmu$, $\vEta = \vmu \vmu\T + \vSigma$.
Then, the transform between $\vxi$ and $\vzeta$ can be described as follows:
\begin{align*}
    \vtheta &= \left(\vEta - \veta\veta\T \right)\inv\veta, \\
    \vTheta &= -\frac{1}{2}\left(\vEta - \veta\veta\T\right)\inv, \\
    \veta &= -\frac{1}{2}\vTheta\inv\vtheta, \\
    \vEta &= \frac{1}{4}\vTheta\inv\vtheta\vtheta\T\vTheta\inv -\frac{1}{2}\vTheta\inv.
\end{align*}

\section{PCA for Gaussian processes (GP-PCA)}
\begin{figure}
    \centering
    \includegraphics[width=15cm]{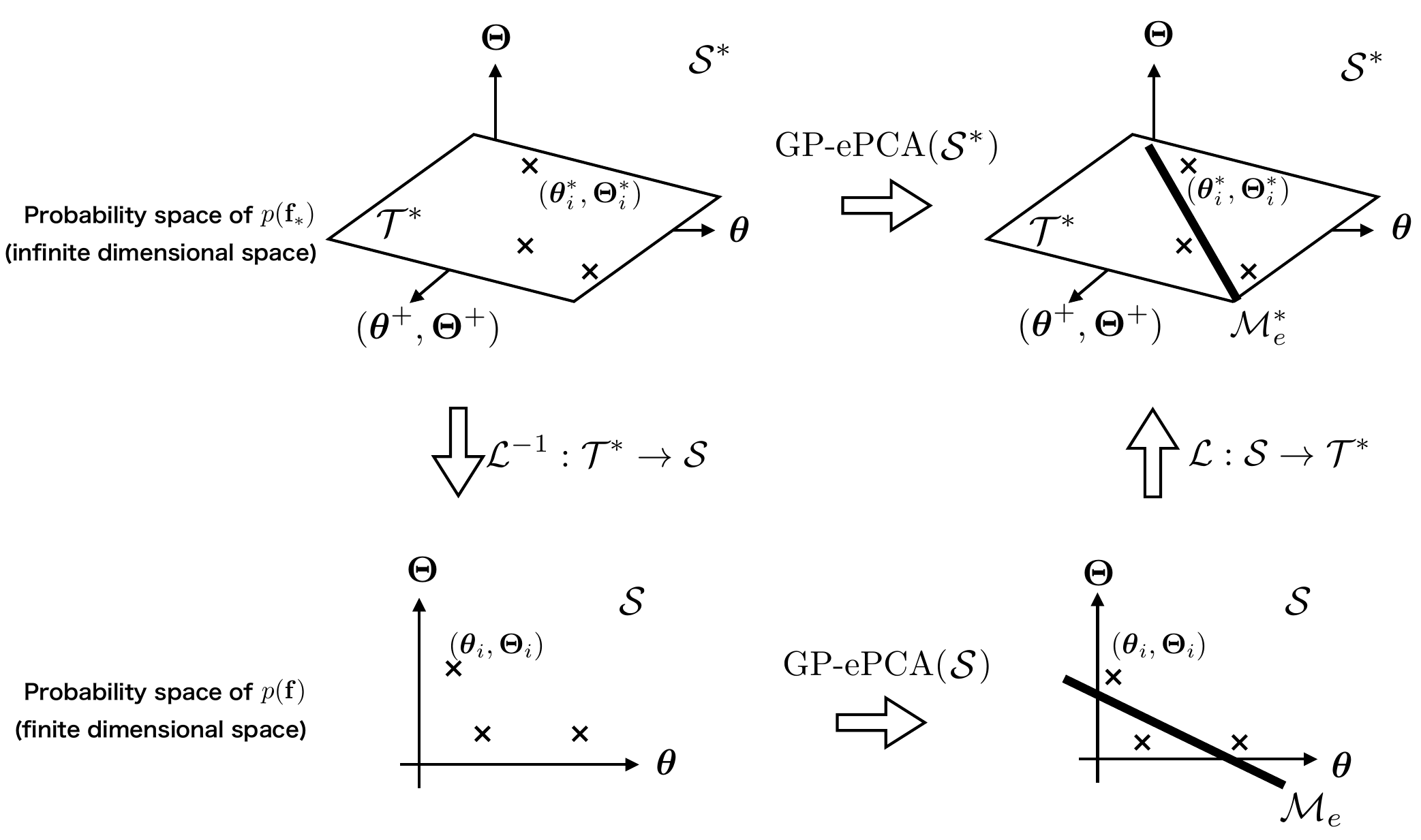} \\
    \caption{Concept of GP-ePCA. An e-flat subspace $\cM^\ast_e \subset \cS^\ast$ is shown to be identical to an e-flat subspace $\cM_e \subset \cS$ to $\cT^\ast$ through linear map $\cL$.}
    \label{concept_of_GP-PCA}
\end{figure}

Similar to e-PCA and m-PCA, we consider two types of GP-PCA: GP-ePCA and GP-mPCA. In this study, we only explain the GP-ePCA, but the same argument holds for GP-mPCA. Let $p(f \mid \vy_i)$ be a GP posterior obtained from $\{X_i,\vy_i\}$. When a set of posteriors $\cP = \{p(f \mid \vy_i)\}^I_{i=1}$ is given, the task of GP-ePCA is to estimate an e-flat subspace minimizing KL divergence between GP posteriors and their corresponding points on the subspace. However, it is nontrivial to define a structure of GPs since GP has an infinite-dimensional parameter.

This study shows that a set of GP posteriors is a finite-dimensional dually flat space under the assumption that each posterior has the same prior and reduces the task of GP-ePCA to a task of estimating a subspace on finite space. To explain our approach, we introduce the two probabilistic spaces shown in Fig~\ref{concept_of_GP-PCA}. One is a space consisting of Gaussian distributions for an output vector $\vf:=f(X)$ corresponding to a training set $X=\bigcup^I_{i=1}X_i$. The other is a space consisting of Gaussian distributions for an output vector $\vf_\ast:=f(X_\ast)$ corresponding to a set $X_\ast$ which is a union of the training input set and an arbitrary test input set $X_+$. The former is denoted by $\cS$ and the latter is denoted by $\cS^\ast$. Both $\cS$ and $\cS^\ast$ are dually flat spaces since they are a set of Gaussian distributions. Note that $\cS^\ast$ can be regarded as an infinite-dimensional space since the cardinal of $X_+$ can be any number. In our approach, we define a space consisting of GP posteriors as a subspace on $\cS^\ast$ denoted by $\cT^\ast$. Then, we estimate an e-flat subspace $\cM_e$ on $\cS$ and transform $\cM_e$ to $\cS^\ast$ using an affine map $\cL : \cS \rightarrow \cT^\ast$ instead of estimating an e-flat subspace $\cM^\ast_e$ on $\cS^\ast$. Since there is no guarantee that $\cL(\cM_e)$ is equivalent to $\cM^\ast_e$, this study proves this.

In the present section, after defining $\cT^\ast$ and GP-ePCA, we prove that $\cM^\ast_e$ and $\cL(\cM_e)$ are equivalent. Next, we describe the standard algorithm and its sparse approximation algorithm.

\subsection{Definition of the structure of GP posteriors and GP-ePCA}

Let $X$ and $X_+$ be a union set of $\{X_i\}^I_{i=1}$ and test set. We consider estimating $p(\vf \mid \vy_i)$ given $\{X_i,\vy_i\}$ in each task. Then, $i$-th task's predictive distribution for $X_+$ is derived as $q(\vf_+\mid\vy_i)=\int p(\vf_+\mid\vf)p(\vf \mid \vy_i)d\vf$. Suppose that GP posteriors have a common prior, then $q(\vf_+\mid\vy_i)$ is determined uniquely given $p(\vf \mid \vy_i)$. From this fact, the affine subspace spanned by GP posteriors is defined as follows:
\begin{definition}
Let $X$, $X_+$ and $X_\ast$ be an input set, test set, and a union set of input and test sets, respectively. We denote the size of $X$ by $N$. Let $p(\vf\mid\vrho)$ be a Gaussian distribution with $\vrho$, where $\vrho$ is a pair of $N$-dimensional vector $\vmu$ and $N \times N$ positive-definite symmetric matrix $\vSigma$. Then, a probability space consisting of GP posteriors corresponding to $f(X_\ast)$ with a common prior is defined by the following equation:
\begin{equation}
    \cT^\ast=\Set{q(\vf_+,\vf\mid\vrho)|q(\vf_+,\vf\mid\vrho) = p(\vf_+\mid\vf)p(\vf\mid\vrho), \forall \vrho},
    \label{eq_GP_posterior_space}
\end{equation}
where $p(\vf_+\mid\vf)$ is a conditional distribution of the prior and $p(\vf \mid \vrho)$ is any Gaussian distribution with a parameter $\vrho$. In particular, when $X=X_\ast$ holds (i.e., when $X_+$ is an empty set), $\cS^\ast$ and $\cT^\ast$ are denoted by $\cS$ and $\cT$, respectively.
\label{model_manifold_of_GP}
\end{definition}

Satisfying the assumption, $p(\vf_+,\vf\mid \vy_i)$ is contained in $\cT^\ast$. Let $\vrho_i=\{\vmu_i,\vSigma_i\}$ be a parameter of $p(\vf_+,\vf\mid \vy_i)$. $\vrho_i$ can be described as follows:
\begin{align}
    \vmu_i &= \vmu_0 + \vK_i(\vK_{ii}+\beta^{-1}\vI)\inv(\vy_i-\vmu_{0i}), \label{eq_rho_mean} \\
    \vSigma_i &= \vK - \vK_i(\vK_{ii}+\beta^{-1}\vI)\inv \vK\T_i, \label{eq_rho_cov}
\end{align}
where $\vK=k(X,X)$, $\vK_i=k(X,X_i)$, $\vK_{ii}=k(X_i,X_i)$, $\vmu_0=\mu_0(X)$ and $\vmu_{0i}=\mu_0(X_i)$. Therefore, we can define a space of GP posteriors as $\cT^\ast$.

Since $\cS^\ast$ is a dually flat space, $p(\vf_\ast) \in \cS^\ast$ can be represented by e-coordinate and m-coordinate denoted by $\vxi^\ast$ and $\vzeta^\ast$, respectively. We denote e-coordinate and m-coordinate for a point on $\cT^\ast$ parameterized by $\vrho$ as $\vxi^\ast(\vrho)$ and $\vzeta^\ast(\vrho)$, respectively. From the definition of $\cT^\ast$, when $X_\ast=X$, $\cS=\cT$ holds since $\vmu_\ast=\vmu$ and $\vSigma_\ast=\vSigma$ hold. This means that $\cT(=\cS)$ is also a dually flat space. Therefore, we denote e-coordinate and m-coordinate of $p(\vf \mid \vrho) \in \cT$ by $\vxi=\vxi(\vrho)$ and $\vzeta=\vzeta(\vrho)$, respectively. 

By using the definition of $\cT^\ast$, we define GP-ePCA in the respective spaces of $\cS^\ast$ and $\cS$.
\begin{definition}
Let $\{\vxi^\ast(\vrho_i)\}^I_{i=1}$ be a set of GPs on the $\cT^\ast$. Then, the objective function of GP-ePCA on $\cS^\ast$ is defined as follows:
\begin{align}
    \hat{\vW}^\ast,\hat{\vU}^\ast &= \argmin_{\vW^\ast,\vU^\ast}E^\ast(\vW^\ast,\vU^\ast) \nonumber \\
    &= \argmin_{\vW^\ast,\vU^\ast}\sum^I_{i=1}\KL{\vxi^\ast(\vrho_i)}{\vxi^\ast(\vw^\ast_i,\vU^\ast)}.
    \label{eq_obj_func_GP_PCA_ast}
\end{align}
GP-ePCA estimating e-flat submanifold $\cM^\ast_e\subset \cS^\ast$ minimizing Eq.~\eqref{eq_obj_func_GP_PCA_ast} is called GP-ePCA$(\cS^\ast)$. Here, $\vxi^\ast(\vw^\ast_i,\vU^\ast)$ is e-coordinate of $\cM^\ast_e$ denoted by a linear combination of $\mathbf{U}^\ast=(\vu^\ast_0,\vu^\ast_1,\vu^\ast_2,\ldots,\vu^\ast_L)\T$ with weight $(1,{\vw^\ast_i}\T)$, where $\vw^\ast_i:=(w^\ast_1,w^\ast_2,\ldots,w^\ast_L)\T$.

Similarly, when $\{\vxi(\vrho_i)\}^I_{i=1}$ is observed, we call the ePCA minimizing the following equation GP-ePCA$(\cS)$:
\begin{align}
    \hat{\vW},\hat{\vU} &= \argmin_{\vW,\vU}E(\vW,\vU) \nonumber \\
    &= \argmin_{\vW,\vU}\sum^I_{i=1}\KL{\vxi_i}{\vxi(\vw_i,\vU)}.
    \label{eq_obj_func_GP_PCA}
\end{align}
Here, $\vxi_i$ and $\vxi(\vw_i,\vU)$ are e-coordinate of $\cS$ and $\cM_e$, which is a linear combination of $\mathbf{U}=(\vu_0,\vu_1,\vu_2,\ldots,\vu_L)\T$ with weight $(1,\vw\T_i)$, where $\vw_i:=(w_1,w_2,\ldots,w_L)\T$.
\label{definition_GP-PCA}
\end{definition}

In this study, we guarantee that GP-ePCA($\cS^\ast$) is equivalent to GP-ePCA($\cS$) by the following theorem.

\begin{theorem}
Let $\cM^\ast_e$ and $\cM_e$ be an e-flat subspace on $\cS^\ast$ minimizing Eq.~\eqref{eq_obj_func_GP_PCA_ast} and an e-flat subspace on $\cS$ minimizing Eq.~\eqref{eq_obj_func_GP_PCA}, respectively. Then, there is an affine map $\mathcal{L} : \cS \rightarrow \cT^\ast$ satisfying the following equation:
\begin{equation}
    \cM^\ast_e = \mathcal{L}(\cM_e).
    \label{eq_main_theorem}
\end{equation}
\label{main_theorem}
\end{theorem}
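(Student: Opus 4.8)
The plan is to write down the map $\cL$ explicitly, check that it is affine with image exactly $\cT^\ast$, and then transport the whole optimization from $\cS^\ast$ down to the finite space $\cS$. First I would make the conditional prior concrete: since the common prior is a GP, $p(\vf_+\mid\vf)$ is a linear-Gaussian conditional $\mathcal{N}(\vf_+\mid \vA\vf+\mathbf{b},\mathbf{C})$ whose coefficients $\vA=k(X_+,X)\vK\inv$, $\mathbf{b}$ and $\mathbf{C}$ are fixed by the prior and independent of $\vrho$. For a point $p(\vf\mid\vrho)=\mathcal{N}(\vmu,\vSigma)$ of $\cT$ the joint $p(\vf_+\mid\vf)p(\vf\mid\vrho)$ is Gaussian, and collecting its natural parameters I would show that the precision block on $\vf_+$, the $\vf$--$\vf_+$ cross block, and the linear natural parameter on $\vf_+$ are all determined by $(\vA,\mathbf{b},\mathbf{C})$ alone, while the $\vf$-blocks are the natural parameters of $p(\vf\mid\vrho)$ shifted by fixed quantities, namely $\vTheta^\ast_{ff}=\vTheta-\tfrac12\vA\T\mathbf{C}\inv\vA$ and $\vtheta^\ast_f=\vtheta-\vA\T\mathbf{C}\inv\mathbf{b}$. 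Hence $\vxi^\ast(\vrho)=\cL(\vxi(\vrho))$ with $\cL$ affine, and its image $\cT^\ast$ is exactly the affine (e-flat) subspace of $\cS^\ast$ on which these ``conditional'' coordinates are pinned to their prior values; $\cL$ is then an affine isomorphism of $\cS$ onto $\cT^\ast$.

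Second, because an affine map sends affine subspaces to affine subspaces, $\cL(\cM_e)$ is e-flat in $\cS^\ast$ and lies in $\cT^\ast$, and conversely every e-flat subspace contained in $\cT^\ast$ is $\cL$ of a unique e-flat subspace of $\cS$. To compare objectives I would use the chain rule for the KL divergence, $\KL{P}{Q}=\KL{P_f}{Q_f}+E_{P_f}[\KL{P_{+|f}}{Q_{+|f}}]$, applied to a data point $q(\vf_+,\vf\mid\vrho_i)$ and a reconstruction. For a data point the conditional equals the prior; if the reconstruction also lies in $\cT^\ast$ its conditional is the prior too, the second term vanishes, and the first term is exactly $\KL{\vxi_i}{\vxi(\vw_i,\vU)}$. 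Thus $E^\ast(\cL(\cM_e))=E(\cM_e)$ for every e-flat $\cM_e$, so the minimum of $E^\ast$ restricted to subspaces inside $\cT^\ast$ equals $\min_{\cM_e}E$ and is attained at $\cL(\hat{\cM}_e)$; this already gives the inequality $\min E^\ast\le\min E$ and the achievability half of the theorem.

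Third, and this is the crux, I must show the unconstrained minimizer $\cM^\ast_e$ over all e-flat subspaces of $\cS^\ast$ cannot do strictly better, i.e. that it actually sits inside $\cT^\ast$. Applying the same chain-rule split to an arbitrary reconstruction $r_i$ on $\cM^\ast_e$ gives $\KL{q_i}{r_i}=\KL{q_{i,f}}{r_{i,f}}+E_{q_{i,f}}[\KL{\pi_{+|f}}{r_{i,+|f}}]$, and replacing $r_i$ by the point of $\cT^\ast$ with the same $\vf$-marginal deletes the nonnegative second term, so each KL can only decrease. The obstacle is that this replacement does not obviously preserve e-flatness: marginalization is a Schur-complement operation, nonlinear in the natural coordinates, so the projected reconstructions need not lie on an e-flat subspace, and for a general Bregman potential the best affine fit to data lying in an affine subspace can tilt out of that subspace because the cross block of the Fisher metric couples the pinned and free coordinates. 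The result therefore hinges on the special structure exposed by the chain rule, and I would argue that the nonnegative conditional term exactly offsets any marginal gain obtained by bending out of $\cT^\ast$: using that the data lie in the e-flat $\cT^\ast$, I would invoke the Pythagorean theorem relative to $\cT^\ast$ to show that the residual of the optimal reconstructions has vanishing component in the pinned (conditional) coordinates, equivalently that the stationarity conditions force the pinned natural coordinates of $\cM^\ast_e$ to the common prior value, so that $\cM^\ast_e\subset\cT^\ast$.

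Combined with the second step, $\cM^\ast_e\subset\cT^\ast$ yields $\cM^\ast_e=\cL(\cM_e)$ for the e-flat $\cM_e$ minimizing $E$, which is the claim. I expect the decoupling of the pinned coordinates in the optimality conditions, resting on the block structure of the joint precision computed in the first step together with the exactness of the chain-rule split, to be the delicate part of the argument; the remaining steps are either routine linear algebra ($\cL$ affine, image $\cT^\ast$) or direct consequences of the projection theorem already available in the excerpt.
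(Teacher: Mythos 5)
Your overall plan coincides with the paper's own decomposition into three statements: (S1) the coordinate map $\cL$ is affine, (S2) the KL divergence is preserved under $\cL$, and (S3) the minimizer $\cM^\ast_e$ lies in $\cT^\ast$. Your route to (S1) is in fact cleaner than the paper's: rather than computing the joint's natural parameters through Woodbury identities, you read them off from $\log q(\vf_+,\vf\mid\vrho)=\log p(\vf_+\mid\vf)+\log p(\vf\mid\vrho)$, which immediately exhibits $\cT^\ast$ as an e-affine ``pinned-coordinate'' subspace. Your (S2) is identical to the paper's argument (chain rule for KL, with the conditional term vanishing because both conditionals equal the prior's).

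The gap is in (S3), and you have correctly located it but not closed it. The missing ingredient is the m-flatness of $\cT^\ast$ (the paper's Lemma~\ref{lemma_m_flatness}): the mean parameters $\vzeta^\ast(\vrho)$ of the joint are \emph{also} an affine function of the mean parameters $\vzeta(\vrho)$ of the $\vf$-marginal, since $E[\vf_+]$, $E[\vf_+\vf\T]$ and $E[\vf_+\vf_+\T]$ are affine in $(E[\vf],E[\vf\vf\T])$ for a fixed linear-Gaussian conditional. Only with both e- and m-flatness in hand can one choose a mixed dual coordinate system in which $\cT^\ast=\{\vzeta^\ast\mid\vzeta^\ast\coII=\vzero\}$; then the gradient of $E^\ast$ in the transverse directions is the m-coordinate residual $\hat{\vZeta}^\ast\coII-\vZeta^\ast\coII$, which vanishes automatically because data and reconstructions both lie in $\cT^\ast$, while the tangential components vanish by constrained stationarity. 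Your phrase ``the residual of the optimal reconstructions has vanishing component in the pinned (conditional) coordinates'' silently identifies pinning in e-coordinates with vanishing of the dual (m-coordinate) residual --- exactly the metric coupling you worry about in your own third paragraph --- and that identification is precisely what m-flatness licenses; the Pythagorean theorem alone, applied relative to the e-flat $\cT^\ast$, does not deliver it. Note also that this style of argument (the paper's included) really establishes only that the minimizer constrained to $\cT^\ast$ is a stationary point of the unconstrained problem; your stronger claim that stationarity ``forces'' the pinned coordinates of an arbitrary unconstrained optimum to the prior values would need a separate uniqueness argument that is not available for this non-convex objective.
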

We prove the theorem below.

\subsection{Proof of Theorem~\ref{main_theorem}}
The proof of the Theorem~\ref{main_theorem} is composed of the proof of the following three statements:
\begin{enumerate}
    \renewcommand{\theenumi}{S\arabic{enumi}}
    \renewcommand{\labelenumi}{(\theenumi)}
    \item For $\forall \vrho$, there is $\cL$ satisfying $\vxi^\ast(\vrho) = \cL(\vxi(\vrho))$. \label{Statement1}
    \item For $\forall \vrho$, $\vrho'$, $\KL{\vxi^\ast(\vrho)}{\vxi^\ast(\vrho')}=\KL{\vxi(\vrho)}{\vxi(\vrho')}$ holds. \label{Statement2}
    \item For a subspace $\cM^\ast_e \subset \cS^\ast$ minimizing $E^\ast(\vW^\ast,\vU^\ast)$, $\cM^\ast_e\subset\cT^\ast$ holds. \label{Statement3}
\end{enumerate}

From (\ref{Statement1}) and (\ref{Statement2}), denoting a subspace minimizing Eq.~\eqref{eq_obj_func_GP_PCA} by $\cM_e$, we can prove that $\cL(\cM_e)$ also minimizesf Eq.~\eqref{eq_obj_func_GP_PCA_ast} in a set of subspaces on $\cT^\ast$. However, since a subspace minimizing Eq.~\eqref{eq_obj_func_GP_PCA_ast} does not always lie on $\cT^\ast$, we confirm this by (\ref{Statement3}).

To prove the statements, we present the following lemmas:
\begin{lemma}
Let $\vrho$ be a parameter of $\cT^\ast$. Then, there is an affine map $\mathcal{L}:\cS \rightarrow \cT^\ast$ satisfying the following equation:
\begin{equation}
    \vxi_\ast(\vrho) = \mathcal{L}(\vxi(\vrho))
\end{equation}
\label{lemma_e_flatness}
\end{lemma}
\begin{proof}
The proof is shown in Appendix~\ref{sec_appendix_proof_lemmas}
\end{proof}

\begin{lemma}
Let $\vrho$ and $\vrho'$ are two arbitrary parameters, and let us take two points $q(\vf_\ast\mid \vrho)$ and $q(\vf_\ast\mid \vrho')$ in $\cT^\ast$, and $q(\vf\mid \vrho)$ and $q(\vf\mid \vrho')$ in $\cT$. Then, the following equation holds:
\begin{equation}
    \KL{q(\vf_\ast\mid \vrho)}{q(\vf_\ast\mid \vrho')}=\KL{q(\vf\mid \vrho)}{q(\vf\mid \vrho')}
\end{equation}
\label{lemma_equivalent_of_kl_divergence}
\end{lemma}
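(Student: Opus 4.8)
The plan is to exploit the factorization that the definition of $\cT^\ast$ builds into every one of its points. By Definition~\ref{model_manifold_of_GP}, writing $\vf_\ast=(\vf_+,\vf)$, any $q(\vf_\ast\mid\vrho)\in\cT^\ast$ decomposes as $q(\vf_\ast\mid\vrho)=p(\vf_+\mid\vf)\,q(\vf\mid\vrho)$, where the conditional prior $p(\vf_+\mid\vf)$ is fixed and does not depend on $\vrho$. I would start from the KL divergence on $\cS^\ast$ written as an integral over $\vf_\ast$, and substitute this factorization into both the reference distribution (parameterized by $\vrho$) and the comparison distribution (parameterized by $\vrho'$).

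The key observation is that the shared factor $p(\vf_+\mid\vf)$ cancels inside the logarithm of the density ratio:
\begin{equation*}
    \log\frac{q(\vf_\ast\mid\vrho)}{q(\vf_\ast\mid\vrho')}
    =\log\frac{p(\vf_+\mid\vf)\,q(\vf\mid\vrho)}{p(\vf_+\mid\vf)\,q(\vf\mid\vrho')}
    =\log\frac{q(\vf\mid\vrho)}{q(\vf\mid\vrho')},
\end{equation*}
so the integrand depends on $\vf_+$ only through the weighting measure $p(\vf_+\mid\vf)\,q(\vf\mid\vrho)$. The remaining step is to integrate out $\vf_+$: since $\int p(\vf_+\mid\vf)\,d\vf_+=1$ for every $\vf$, the $\vf_+$-integration collapses the weight back to $q(\vf\mid\vrho)$, and the whole expression reduces to $\int q(\vf\mid\vrho)\log\frac{q(\vf\mid\vrho)}{q(\vf\mid\vrho')}\,d\vf=\KL{q(\vf\mid\vrho)}{q(\vf\mid\vrho')}$, which is precisely the divergence on $\cT$.

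I expect essentially no algebraic obstacle here: the argument is just the chain rule for KL divergence specialized to the case in which the conditional component is common to both distributions. The single point that deserves care is justifying that the same conditional prior $p(\vf_+\mid\vf)$ appears in both $q(\vf_\ast\mid\vrho)$ and $q(\vf_\ast\mid\vrho')$, since this is exactly what enables the cancellation; but this is precisely the common-prior assumption embedded in Definition~\ref{model_manifold_of_GP}. Because $X_+$ is an arbitrary yet finite test set, all densities involved are finite-dimensional Gaussians, so interchanging the order of integration and invoking $\int p(\vf_+\mid\vf)\,d\vf_+=1$ require no measure-theoretic subtleties. Finally, since the identity holds for every finite $X_+$, it holds throughout $\cT^\ast$, establishing the claimed equality of divergences on $\cT^\ast$ and $\cT$.
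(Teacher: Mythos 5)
Your argument is correct and is essentially the paper's own proof: the paper also invokes the chain-rule decomposition $\KL{q(\vf_\ast\mid\vrho)}{q(\vf_\ast\mid\vrho')}=\KL{q(\vf\mid\vrho)}{q(\vf\mid\vrho')}+\mathbb{E}_{q(\vf\mid\vrho)}[\KL{p(\vf_+\mid\vf)}{p(\vf_+\mid\vf)}]$ and observes that the second term vanishes because the conditional prior is common to both points. Your cancellation of $p(\vf_+\mid\vf)$ inside the log ratio followed by integrating out $\vf_+$ is the same computation written out explicitly.
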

\begin{proof}
The proof is shown in Appendix~\ref{sec_appendix_proof_lemmas}
\end{proof}

\begin{lemma}
Suppose $\cS^\ast$ be a dually flat manifold and $\cT^\ast \subset \cS^\ast$ be a $K$-dimensional submanifold.
If $\cT^\ast$ is a dually flat and a set of points $P = \{p(\vf^\ast\mid \vrho_1),\ldots,p(\vf^\ast\mid \vrho_I)\}\in \cT^\ast$,
the $L$-dimensional e-flat submanifold $\cM^\ast_e$ minimizing Eq.~\eqref{eq_obj_func_GP_PCA_ast} for $P$ is included in $\cT^\ast$ when $L \le K$. 
\label{lemma_equivalent_gppca}
\end{lemma}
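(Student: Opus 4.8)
\emph{Proof strategy.} The plan is to show that no e-flat subspace which leaves $\cT^\ast$ can beat one that stays inside it, so that the minimizer of Eq.~\eqref{eq_obj_func_GP_PCA_ast} is forced into $\cT^\ast$. First I would fix an arbitrary $L$-dimensional e-flat subspace $\cM^\ast_e\subset\cS^\ast$ and, for each data point $p_i\in P$, write $q_i$ for its m-projection onto $\cM^\ast_e$. By the projection theorem this $q_i$ exists, is unique, and attains $\min_{q\in\cM^\ast_e}\KL{p_i}{q}$, so that the best value of $E^\ast$ on $\cM^\ast_e$ equals $\sum_{i=1}^I\KL{p_i}{q_i}$. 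This reduces the statement to controlling where these m-projections lie.

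Next I would push each $q_i$ back onto $\cT^\ast$. Since $\cT^\ast$ is e-flat, the m-projection $q_i'$ of $q_i$ onto $\cT^\ast$ exists and is unique, and the m-geodesic from $q_i$ to $q_i'$ meets $\cT^\ast$ orthogonally. Because $p_i\in\cT^\ast$ and $\cT^\ast$ is e-flat, the e-geodesic from $p_i$ to $q_i'$ lies inside $\cT^\ast$, hence is orthogonal to that m-geodesic. The Pythagorean theorem then gives
\begin{equation*}
\KL{p_i}{q_i}=\KL{p_i}{q_i'}+\KL{q_i'}{q_i}\ \ge\ \KL{p_i}{q_i'},
\end{equation*}
with equality precisely when $q_i=q_i'$, i.e. when $q_i\in\cT^\ast$. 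Thus replacing each projection by its image in $\cT^\ast$ never increases the objective.

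To turn this pointwise bound into a statement about subspaces, I would argue that the images $\{q_i'\}$ again lie on a single e-flat subspace $\cM'_e\subset\cT^\ast$ of dimension at most $L$ (this is where $L\le K$ is used, so that such a competitor fits inside the $K$-dimensional $\cT^\ast$). Granting this, the projection theorem applied inside $\cT^\ast$ gives $\sum_i\KL{p_i}{q_i'}\ge E^\ast(\cM'_e)$, so that $E^\ast(\cM^\ast_e)\ge E^\ast(\cM'_e)$ for every ambient $\cM^\ast_e$. Taking $\cM^\ast_e$ to be the true minimizer, the competitor $\cM'_e\subset\cT^\ast$ forces all the inequalities above to be equalities; in particular equality in the Pythagorean step yields $q_i=q_i'\in\cT^\ast$ for every $i$. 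Finally, since an $L$-dimensional e-flat subspace is the e-flat (affine, in e-coordinates) hull of its m-projections whenever at least $L+1$ of the data are in general position, and since $\cT^\ast$ is itself e-flat, this hull---namely $\cM^\ast_e$---is contained in $\cT^\ast$, which is the claim.

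The hard part will be the middle step: proving that back-projection onto $\cT^\ast$ sends the e-flat $\cM^\ast_e$ into an e-flat subspace of $\cT^\ast$ of dimension no larger than $L$. This is exactly where the hypothesis that $\cT^\ast$ is \emph{doubly} flat (both e-flat and m-flat in $\cS^\ast$) must be exploited: in e-coordinates adapted to $\cT^\ast=\{\,\vxi^b=\mathrm{const}\,\}$, double flatness should force the m-projection onto $\cT^\ast$ to act affinely (concretely, as the coordinate map $(\vxi^a,\vxi^b)\mapsto(\vxi^a,\mathrm{const})$), and an affine map carries e-flat subspaces to e-flat subspaces of no larger dimension. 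Without affineness the back-projected configuration is only an $L$-dimensional curved image whose e-flat hull could exceed dimension $L$, so establishing this affine (equivalently, separable-potential) behaviour from double flatness is the technical crux; the general-position assumption on the data enters only in the final hull step.
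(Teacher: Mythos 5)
Your route is genuinely different from the paper's. The paper does not compare an ambient minimizer against a competitor inside $\cT^\ast$; it takes the minimizer of $E^\ast$ \emph{constrained} to $\cT^\ast$, writes the gradients of $E^\ast$ with respect to $\vW^\ast$ and $\vU^\ast$ in a dual coordinate system adapted to $\cT^\ast=\{\vzeta^\ast \mid \vzeta^\ast\coII=\vzero\}$, and observes that the normal components of those gradients vanish automatically because both $\vZeta^\ast$ and $\hat{\vZeta}^\ast$ have zero $\mathrm{II}$-block; hence the constrained optimum is already a stationary point in all of $\cS^\ast$. Your Pythagorean argument, if completed, would prove something stronger (that a global minimizer cannot leave $\cT^\ast$ at all), and your first two steps are sound: the decomposition $\KL{p_i}{q_i}=\KL{p_i}{q_i'}+\KL{q_i'}{q_i}$ is a correct use of the projection and Pythagorean theorems.

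The gap is exactly where you flag it, and the mechanism you propose to close it is false. The m-projection onto an e-flat submanifold $\{\vxi=\vxi_0+A\vw\}$ is characterized by $A\T(\vzeta_p-\vzeta_q)=\vzero$: it preserves the \emph{tangential m-coordinates}, and when the target is dually flat it is an affine map of the m-coordinates, not of the e-coordinates. Because $\vxi\mapsto\vzeta$ is a nonlinear Legendre transform, the projection does not act as $(\vxi^a,\vxi^b)\mapsto(\vxi^a,\mathrm{const})$ and does not carry e-flat subspaces to e-flat subspaces. Concretely, in the family of one-dimensional Gaussians the submanifold $T=\{\mu=0\}$ is dually flat, yet the m-projection of $\mathcal{N}(\mu,\sigma^2)$ onto $T$ is $\mathcal{N}(0,\mu^2+\sigma^2)$, which is not affine in the natural parameters $(\mu/\sigma^2,\,-1/(2\sigma^2))$. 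Consequently the $I$ back-projected points $q_i'$ need not lie on any $L$-dimensional e-flat subspace of $\cT^\ast$, the competitor $\cM'_e$ need not exist, and the chain $E^\ast(\cM^\ast_e)\ge\sum_i\KL{p_i}{q_i'}\ge E^\ast(\cM'_e)$ collapses. (Your final ``e-flat hull'' step also needs a general-position assumption that is not in the lemma.) To repair the argument you would either have to establish e-affineness of the back-projection for the specific $\cT^\ast$ at hand, or switch to the paper's stationarity argument, which avoids the issue by never projecting out of $\cT^\ast$ in the first place.
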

\begin{proof}
The proof is shown by Appendix~\ref{sec_appendix_proof_lemmas}
\end{proof}

\begin{lemma}
Let $\vrho$ be a parameter of $\cT^\ast$. Then, there is a linear mapping $\mathcal{L}:\cT \rightarrow \cT^\ast$ satisfying the following equation:
\begin{equation}
    \vzeta_\ast(\vrho) = \mathcal{L}(\vzeta(\vrho))
\end{equation}
\label{lemma_m_flatness}
\end{lemma}
\begin{proof}
The proof is shown by Appendix~\ref{sec_appendix_proof_lemmas}
\end{proof}

The proofs of (\ref{Statement1}) and (\ref{Statement2}) are obvious from Lemma~\ref{lemma_e_flatness} and Lemma~\ref{lemma_equivalent_of_kl_divergence}. From Lemma~\ref{lemma_equivalent_gppca}, (\ref{Statement3}) can be proved by showing that $\cT^\ast$ is a dually flat for arbitrary test set $X_+$. When $X=X_\ast$, (i.e., the test set is empty), then $\cT$ is a dually flat since $\cT=\cS$. When $X\subset X_\ast$, by the linear relation proved in Lemma~\ref{lemma_e_flatness} and Lemma~\ref{lemma_m_flatness}, Lemma~\ref{lemma_equivalent_gppca} also holds in the general case. Thus, Theorem~\ref{main_theorem} is proved.

\section{Algorithm of GP-ePCA}
From the above discussion, GP-ePCA$(\cS^\ast)$ can be reduced to GP-ePCA$(\cS)$. In this Section, we explain a concrete algorithm of GP-ePCA$(\cS)$, its sparse approximation method and hyperparameter optimization.
\subsection{Exact GP-ePCA}
Let $X_i \in \mathcal{X}^{N_i}$ and $\vy_i \in \mathbb{R}^{N_i}$ be a training input and corresponding output dataset of $i$-th task, respectively, where $N_i$ is the size of $X_i$. We denote a union set of the input sets by $X$, i.e., $X=\bigcup^I_{i=1}X_i$ and define the probability space of GP posteriors as Eq.~\eqref{eq_GP_posterior_space}. Then, we denote the GP posterior given $(X_i,\vy_i)$ by $q(f\mid \vrho_i)$. From Theorem~\ref{main_theorem}, the task of GP-ePCA$(\cS)$ is to estimate a subspace $\cM_e$ for $\{p(\vf\mid \vrho_i)\}^I_{i=1}$ and transform $\cM_e$ to $\cT^\ast$. 

In training phase, GP-ePCA calculates the $\{\vrho_i\}^I_{i=1}$ and transforms the m-coordinates $\vZeta := (\vzeta_1,\vzeta_2,\cdots,\vzeta_I)$. The $\vrho_i$ is calculated using Eqs.~\eqref{eq_rho_mean} and \eqref{eq_rho_cov} and from $\vzeta_i=(\veta_i,{\rm vec}(\vEta_i))$ is transformed from $\vrho_i$ by $\veta_i=\vmu_i$ and $\vEta_i=\vSigma_i+\vmu_i\vmu_i\T$. Next, GP-ePCA$(\cS)$ estimates the subspace $\cM_e$ using e-PCA. That is, estimating ${\hat \vW}$ and ${\hat \vU}$ minimizing Eq.~\eqref{eq_obj_func_GP_PCA} through gradient descent iterations. Algorithm~\ref{alg_standard_GP_ePCA} shows the summary of the algorithm. 

In the prediction phase, GP-ePCA predict outputs corresponding to a test data $x$ using the following equations:
\begin{align*}
    \mu_i(x) =& \vmu_0(x)+\vk\T(x)\vK\inv\left(\hat{\vTheta}\inv_i\hat{\vtheta}_i-\vmu_0\right) \\
    \sigma_i(x,x') =& k(x,x')+\vk\T(x)\vK\inv\left(-\frac{1}{2}\hat{\vTheta}\inv_i-\vK\right)\vK\inv\vk(x) 
\end{align*}

Since this algorithm requires calculating the inverse matrix in each task, the calculation cost of the algorithm becomes $\mathcal{O}(IN^3)$, where $N:=\sum^I_{i=1}N_i$. Since this algorithm is impractical, we derive a faster approximation below.

\begin{algorithm}[H]
\caption{Algorithm of Exact GP-ePCA}
\label{alg_standard_GP_ePCA}
\begin{algorithmic}
    \State Given $\{(X_i,\vy_i)\}^I_{i=1}$, kernel $k$ and $\beta$.
    \State Initialize $\vU$ and $\vW$
    \For{$i = 1 \, \ldots \, I$}
        \State $\vmu_i \leftarrow \vmu_0 + \vK_i(\vK_{ii}+\beta^{-1}\vI)\inv(\vy_i+\vmu_{i0})$ 
        \State $\vSigma_i \leftarrow \vK - \vK_i(\vK_{ii}+\beta^{-1}\vI)\inv \vK\T_i$
        \State $\vzeta_i \leftarrow (\vmu_i,{\rm vec}(\vSigma_i+\vmu_i\vmu_i\T))$
    \EndFor
    \While{stopping criterion is met}
        \State $\vW^{\rm (new)} \leftarrow \vW^{\rm (old)}-\varepsilon(\hat{\vZeta}-\vZeta)\T\vU$
        \State $\vU^{\rm (new)} \leftarrow \vU^{\rm (old)}-\varepsilon\vW(\hat{\vZeta}-\vZeta)$
        \State $\vU^{\rm (new)}$ is orthonormalized by QR decomposition
        \For{$i = 1 \, \ldots \, I$}
            \State $\hat{\vxi}_i \leftarrow \hat{\vw}\T_i\hat{\vU}$
        \EndFor
    \EndWhile
\end{algorithmic}
\end{algorithm}

\subsection{Sparse GP-ePCA}
Most sparse approximation methods for GP reduce a calculation cost by approximating the gram matrix for input set using inducing points~\citep{Liu2020}. Let $X_m$ and $\vK$ be a set of inducing points and gram matrix between inputs, respectively. The gram matrix is approximated as
\begin{equation*}
    \vK \approx \vK_m\vK\inv_{mm}\vK\T_m,
\end{equation*}
where $\vK_m=k(X,X_m)$, $\vK_{mm}=k(X_m,X_m)$. By using this approximation, we consider a set of GPs for $\vf_m:=f(X_m)$ instead of a set of GPs for $f(X)$. Denoting the set of GPs for $\vf_m$ by $\cS_m$, the sparse GP-ePCA estimates a subspace on $\cS_m$ and transforms the subspace to $\cT^\ast$. Then, we reduce the calculation cost of GP-ePCA from $\mathcal{O}(IN^3)$ to $\mathcal{O}(Im^3)$, where $m$ is the size of inducing points. 

We adopt a sparse GP based on variational inference proposed by Titsias~\citep{Titsias2009}. The variational inference-based sparse GP minimizes the KL-divergence between a true posterior $p(\vf,\vf_m \mid \vy)$ and variational distribution $q(\vf, \vf_m)$, that is,
\begin{equation*}
    \KL{q(\vf, \vf_m)}{p(\vf,\vf_m \mid \vy)}=\int q(\vf, \vf_m) \ln \frac{q(\vf, \vf_m)}{p(\vf,\vf_m \mid \vy)}d\vf d\vf_m.
\end{equation*}
Then, the variational distribution minimizing the equation is derived as follows:
\begin{align*}
    q(\vf_m) =& \mathcal{N}(\vf_m \mid \vmu,\vSigma) \\
    \vmu =& \vmu_{m0} + \vK_{mm}\vA\inv_{mm}\vK\T_{m}(\vy-\vmu_0) \\ 
    \vSigma =& \beta\inv\vK_{mm}\vA\inv_{mm}\vK_{mm},
\end{align*}
where $\vA_{mm}=\beta\inv\vK_{mm}+\vK_m\vK\T_m$. The predictive distribution for new input $x_+$ is as follows:
\begin{align*}
    q(f(x_+)) =& \mathcal{N}(f(x_+) \mid \mu(x_+),\sigma(x_+,x_+)) \\
    \mu(x_+) =& \mu_{0}(x_+) + \vk\T_m(x_+)\vK\inv_{mm}\vmu \\ 
    \sigma(x_+,x_+) =& \beta\inv\vk\T_m(x_+)\vK\inv_{mm}\vSigma\vK\inv_{mm}\vk_m(x_+),
\end{align*}

We regard $\vmu$ and $\vSigma$ as a parameter of Eq.~\eqref{eq_GP_posterior_space}. That is, denoting a parameter of $i$-th task's variational distribution by $\vrho_i=\{\vmu_i,\vSigma_i\}$, the sparse GP-ePCA estimates a subspace minimizing Eq.~\eqref{eq_obj_func_GP_PCA} for $\{\vrho_i\}^I_{i=1}$ and transforms the subspace to $\cT^\ast$ by the affine map $\cL$. 

In practice, to stabilize the sparse GP-ePCA, we re-parametrize $\vrho_i$ as follows:
\begin{align*}
    \vmu' =& \vK\inv_{mm}\vmu, \\
    \vSigma' =& \vK\inv_{mm}\vSigma\vK\inv_{mm}.
\end{align*}
We denote a space of $\vrho'=\{\vmu',\vSigma'\}$ by $\cS'_m$.
Letting $\vtheta'={\vSigma'}\inv\vmu'$, $\vTheta' = -\frac{1}{2}{\vSigma'}\inv$, $\vtheta=\vSigma\inv\vmu$ and $\vTheta = -\frac{1}{2}\vSigma\inv$, the following relationships between $\vxi'(\vrho)=(\vtheta',{\rm vec}(\vTheta'))$ and $\vxi(\vrho)=(\vtheta,{\rm vec}(\vTheta))$ hold.
\begin{align*}
    \vTheta =& \vK\inv_{mm}\vTheta'\vK\inv_{mm}. \\
    \vtheta =& \vK\inv_{mm}\vTheta'\vK\inv_{mm}\vmu_0+\vK\inv_{mm}\vtheta'.
\end{align*}
Furthermore, using the above equations, we can show the equivalence between the KL-divergence of $\vxi'$ and that of $\vxi$. That is, for any $\vrho_i$ and $\vrho_j$, the following equation holds:
\begin{equation*}
    \KL{\vxi'(\vrho_i)}{\vxi'(\vrho_j)}=\KL{\vxi(\vrho_i)}{\vxi(\vrho_j)}
\end{equation*}
From the above relationships, $\cS_m$ and $\cS'_m$ are isomorphic. Therefore, we estimate a subspace on $\cS'_m$ instead of estimating a subspace on $\cS_m$. The algorithm is summarized by algorithm~\ref{alg_sparse_approx_GP_ePCA}.

\begin{algorithm}[H]
\caption{Algorithm of Sparse GP-ePCA}
\label{alg_sparse_approx_GP_ePCA}
\begin{algorithmic}
    \State Given $\{(X_i,\vy_i)\}^I_{i=1}$, kernel $k$, $\beta$ and $X_m$.
    \State Initialize $\vU$ and $\vW$
    \For{$i = 1 \, \ldots \, I$}
        \State $\vA_{mm} \leftarrow \beta\inv\vK_{mm}+\vK_m\vK\T_m$
        \State $\vmu_i \leftarrow \vA\inv_{mm}\vK\T_{m}(\vy-\vmu_0)$
        \State $\vSigma_i \leftarrow \beta\inv\vA\inv_{mm}$
        \State $\vzeta_i \leftarrow (\vmu_i,{\rm vec}(\vSigma_i+\vmu_i\vmu_i\T))$
    \EndFor
    \While{stopping criterion is met}
        \State $\vW^{\rm (new)} \leftarrow \vW^{\rm (old)}-\varepsilon(\hat{\vZeta}-\vZeta)\T\vU$
        \State $\vU^{\rm (new)} \leftarrow \vU^{\rm (old)}-\varepsilon\vW(\hat{\vZeta}-\vZeta)$
        \State $\vU^{\rm (new)}$ is orthonormalized by QR decomposition
        \For{$i = 1 \, \ldots \, I$}
            \State $\hat{\vxi}_i \leftarrow \hat{\vw}\T_i\hat{\vU}$
        \EndFor
    \EndWhile
\end{algorithmic}
\end{algorithm}

\subsection{Hyperparameter optimization}

In this study, we use a hierarchical Bayes-based GP (HBGP) proposed by \citet{Yu2005} for the hyperparameters optimization method under the assumption the GP posteriors lie on a subspace estimated by GP-ePCA. In HBGP, $\vmu_0$ and $\vK$ are estimated by maximizing the following likelihood function:
\begin{equation}
    \prod^I_{i=1}p(\vy_i | \vmu_0,\vK)=\prod^I_{i=1}p(\vy_i|\vf_i)p(\vf_i | \vmu_0,\vK)p(\vmu_0,\vK),
    \label{eq_hierarchical_bayes_likelihood}
\end{equation}
where $\vf_i:=f_i(X)$, $p(\vf_i | \vmu,\vSigma)=\mathcal{N}(\vf_i | \vmu_0,\vK)$ and $p(\vmu_0,\vK)$ is the following normal-inverse-Wishart distribution:
\begin{equation*}
    p(\vmu_0,\vK) = \mathcal{N}(\vmu_0 | \vzero, \frac{1}{\pi}\vK)\mathcal{IW}(\vK | \tau, \vK_0).
    \label{eq_transductive_hyper_prior}
\end{equation*}
Here, $k_0$ is a kernel function and $\vK_0=k_0(X,X)$. $\vmu$ and $\vK$ can be estimated by using EM algorithm. However, we cannot calculate a value of the prior corresponding to a new input data. Therefore, we assume that $\vf$ is represented by linear combination of $\vK_0$, that is, \begin{equation*}
    \vf = \vK_0\va.
\end{equation*}
Assuming that $\vf$ is generated by Eq.~\eqref{eq_hierarchical_bayes_likelihood}, let a prior of $\va$ be $p(\va | \vmu_a,\vK_a)$, then the following relationships hold~\citep{Yu2005}.
\begin{align*}
    p(\va | \vmu_a,\vK_a) =& \mathcal{N}(\vmu_a | \vzero, \frac{1}{\pi}\vK_a)\mathcal{IW}(\vK_a | \tau, \vK\inv_0) \\
    \vmu_0 =& \vK_0 \vmu_a \\
    \vK =& \vK_0 \vK_a \vK_0. \\
\end{align*}
By using the relationships, instead of maximizing Eq.~\eqref{eq_hierarchical_bayes_likelihood}, we estimate $\vmu_a$ and $\vK_a$ by maximizing the following equation:
\begin{equation*}
    \prod^I_{i=1}p(\vy_i | \vmu_a,\vK_a)=\prod^I_{i=1}p(\vy_i|\va_i)p(\va_i | \vmu_a,\vK_a)p(\vmu_a,\vK_a),
\end{equation*}
The likelihood is maximized by the following EM algorithm.
\mbox{}\\
\noindent{\textbf{E-Step}}\mbox{}\\
\begin{align*}
    \vxi(\vw_i,\vU) =& (\vtheta_i,{\rm vec}(\vTheta_i)) = (1,\vw)\T_i\vU \\
    \vmu'_i =& -2\vTheta_i \vtheta_i \\
    \vSigma'_i =& -\frac{1}{2}\vTheta\inv_i
\end{align*}
\mbox{}\\
\noindent{\textbf{M-Step}}\mbox{}\\
\begin{align*}
    \vmu_a &= \frac{1}{\pi+I}\sum^I_{i=1}\vmu'_i \\
    \vK_a &= \frac{1}{\pi+I}\sum^I_{i=1}\left\{\pi\vmu_a\vmu\T_a+\tau\vK\inv_0+\sum^{I}_{i=1}\vSigma'_i+\sum^I_{i=1}(\vmu'_i-\vmu_a)(\vmu'_i-\vmu_a)\T\right\} \\
    \beta\inv &= \frac{1}{N}\sum^I_{i=1}\|\vy_i-k_0(X_i,X)\vmu'_i\|^2+{\rm Tr}[k_0(X_i,X)\vK_ak_0(X,X_i)]
\end{align*}
Here, we modify the E-step from HBGP since $\vxi(\vw_i,\vU)$ is more accurate than $\vxi_i$. The hyperparameter is updated after GP-ePCA is converged.

\subsection{Related works}

Dimensionality reduction techniques for probability distributions have been proposed in various fields. For example, there are dimension reduction techniques of a set of categorical distributions~\citep{Hofmann2001} and a set of mixture models~\citep{Akaho2008,Covarrubias2013}. Especially, e-PCA and m-PCA are closely related to the present study~\citep{Collins2001,Akaho2004}. e-PCA and m-PCA are proposed in the context of information geometry for the dimension reduction method of a set of exponential distribution families, which becomes the basic framework for conducting this study. This study differs from previous studies in that it deals with GP sets that are infinite-dimensional stochastic processes.

This study interprets meta-learning for GP from the information geometry viewpoint. Transfer learning and meta-learning are often addressed from the information geometry perspective~\citep{Takano2016,Waytowich2016,Flennerhag2019}. However, to our best knowledge, there is no research of meta-learning for GP addressed from the information geometry viewpoint.

GP-PCA can also be interpreted as a functional PCA (fPCA). fPCA is a method for estimating eigenfunctions from a set of functions~\citep{Shang2014}. Let $\{f_i\}^I_{i=1}$ be a set of functions. fPCA estimates eigenfunctions to minimize the following objective function:
\begin{align*}
    F_{\rm fPCA}=&\sum^I_{i=1}\int (f_i(x)-h(x)-\bar{f}(x))^2p(x)dx, \\
    & s.t. \quad \int h^2(x)p(x)dx=1,
\end{align*}
where $\bar{f}=\frac{1}{I}\sum^I_{i=1}f_i(x)$. In fPCA, each function is represented as a linear combination of $M$ basis functions. Let $\vg(x)=(g_1(x),g_2(x),\ldots,g_I(x))\T$, $f_i$ is obtained as
\begin{align*}
    f_i(x) =& \sum^M_{m=1}r_{im}g_m(x) \\
    =& \vr\T_i\vg(x),
\end{align*}
where $\vr_i=(r_{i1},r_{i2},\ldots,r_{iM})\T$. $h(x)$ is represented as a linear combination of $\vg(x)$: $h(x)=\vs\T\vg(x)$. By using the equations, the objective function of fPCA is rewritten as follows:
\begin{align*}
    F_{\rm fPCA}=&\sum^I_{i=1} (\vr_i-\vs-\bar{\vr})\T\vG(\vr_i-\vs-\bar{\vr}), \\
    & s.t. \quad \vs\T\vG\vs=1,
\end{align*}
where $\bar{\vr}=1/I\sum^I_{i=1}\vr_i$ and $\vG=\int\vg(x)\vg\T(x)p(x)dx$. In practice, $f_i$ is estimated using linear regression from the dataset. That is, the fPCA algorithm consists of two processes: estimating $f_i$ from $\{X_i,\vy_i\}$ and estimating $h$ from $\{f_i\}^I_{i=1}$. When $f_i$ is estimated as GP, GP-PCA is equivalent to fPCA. Therefore, GP-PCA can be interpreted as fPCA considering the estimated function and confidence of the function.

\section{Experimental results}
In this Section, we demonstrate the effectiveness of the proposed method. We compare the performance of sparse GP-ePCA, ICM~\citep{Bonilla2007}, Hierarchical Bayes-based GP (HBGP)~\citep{Yu2005} and Single-task sparse GP by using an artificial dataset and two real datasets, where ICM is implemented by GPy\footnote{\normalsize https://github.com/SheffieldML/GPy}. We also compare the calculation time of exact GP-ePCA and sparse GP-ePCA.
\subsection{Evaluation of performance by using an artificial dataset}
In this experiment, we compare the performance of the proposed method to the other methods in training task and test task. The artificial dataset is generated from the following equations:
\begin{align}
    y_{in} &= z_i \sin(4\pi x_{in}) + 3(1-z_i)(-(x_{in}-1)^2+1) +\varepsilon_{in} \label{eq_output_generator} \\
    x_{in} &\sim U(0,1) \label{eq_input_generator}
\end{align}
where $U[a,b]$ means a continuous uniform distribution of interval $[a,b]$, $\vz_i$ is a latent variable of $i$-th task, and $\varepsilon_{in}$ is a Gaussian noise with mean $0$ and variance $\beta\inv=0.2^2$. 

To verify the performance of GP-ePCA for training and test tasks, training data and test data of training and test tasks are necessary. In training tasks, after $z_i$ is sampled for $50$ points according to Eq.~\eqref{eq_input_generator}, we sample $N$ training data and $100$ test data of each task according to Eq.~\eqref{eq_output_generator}. Figure~\ref{fig_fitting}(a) and (b) show the training sample and true functions of training tasks, respectively. In test tasks, after sampling latent variables of the test tasks at $100$ samples, we sample $N$ training data and $100$ test data for test tasks. Then, the training data of the test tasks are used to determine a point on an estimated subspace in GP-ePCA. In GP and HBGP, the posteriors for the test tasks are estimated from the training data of the task. In ICM, the test tasks are regarded as have been obtained together with the training tasks and are learned together with the training tasks. The performance of each method is evaluated using an average of root mean square error (RMSE) for test data in each task. We calculate the average and standard deviation when $T=5$ times iterates.

GP has hyperparameters, which are kernel and variance of observation noise. In this experiment, we use RBF kernel $k(x,x')=\exp(-(x-x')^2/2l^2)$. Then, the hyperparameters of GP are a length scale $l$ of the RBF kernel and variance of an observation noise $\beta\inv$. These hyperparameters are optimized by maximizing the marginal likelihood in GP and ICM. In GP-ePCA, a rank of subspace is also a hyperparameter. In these experiments, we set $L=1$ since the functions of the artificial data are varied by a one-dimensional parameter. The effectiveness of the rank is validated by the next experiments.

Figure~\ref{fig_fitting} shows representative results of each method. In this case, each task has five samples: single-task GP cannot predict output in an area without training data since a mean function of GP is close to prior (Fig.~\ref{fig_fitting}(c)). In ICM, while most of tasks are able to estimate the predictive function, the prediction function cannot be estimated  for some tasks, shown by Fig.~\ref{fig_fitting}(d). On the other hand, HBGP and GP-ePCA can predict output in an area without training data since mean functions are smoothed by transferring knowledge from other tasks (Fig.~\ref{fig_fitting}(c)). This result implies that estimating the mean function is important in few-shot learning. Figure~\ref{fig_gene_error_artificial} shows the RMSE of the methods for training and test tasks when $N$ is varied. These results shows that GP-ePCA performs better than the other methods in both training and test tasks. Especially, even in case that training sample size of each task is small, the RMSE of the proposed method remains low compared to other methods in both of training tasks and test tasks. 

\begin{figure}
    \centering
    \begin{tabular}{ccc}
    \includegraphics[width=4.5cm]{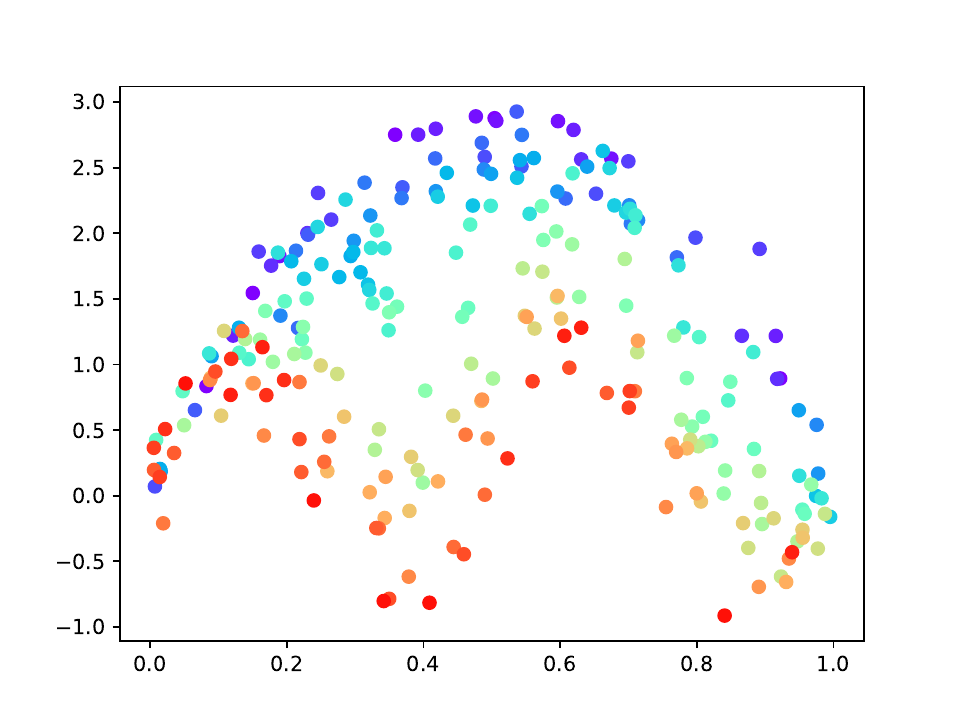}&
    \includegraphics[width=4.5cm]{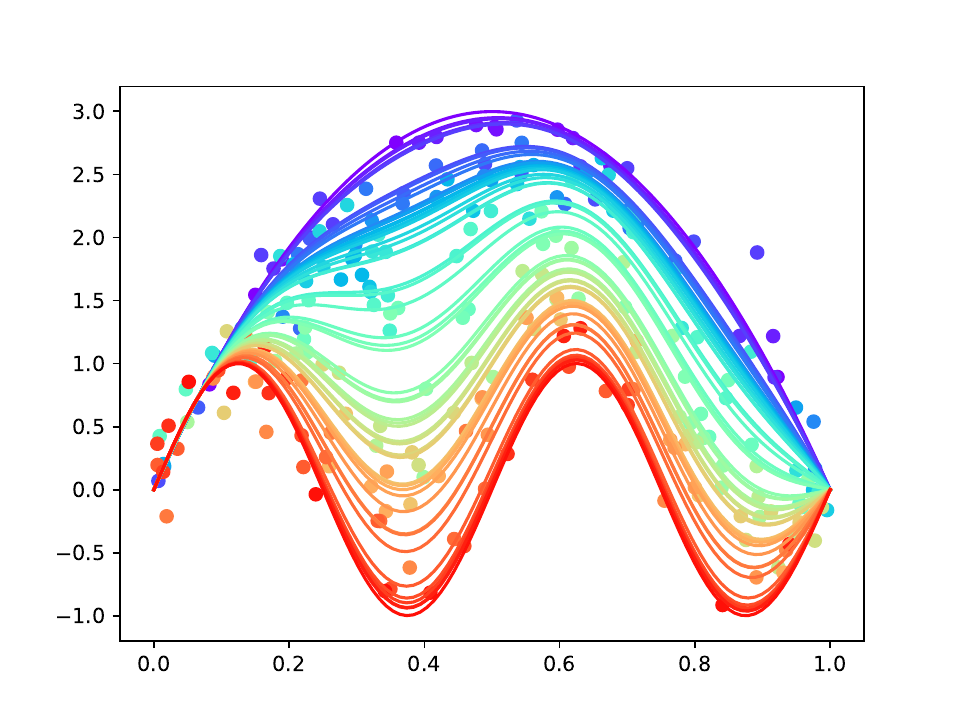}&
    \includegraphics[width=4.5cm]{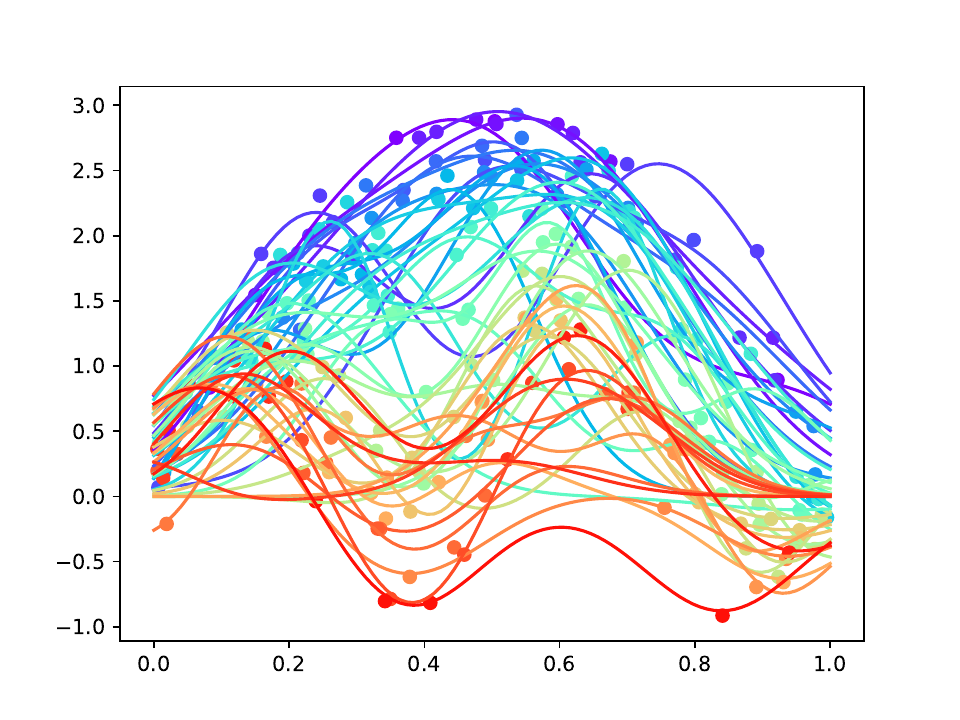} \\
    (a) Training samples& (b) Ground truth& (c) GP\\
    \includegraphics[width=4.5cm]{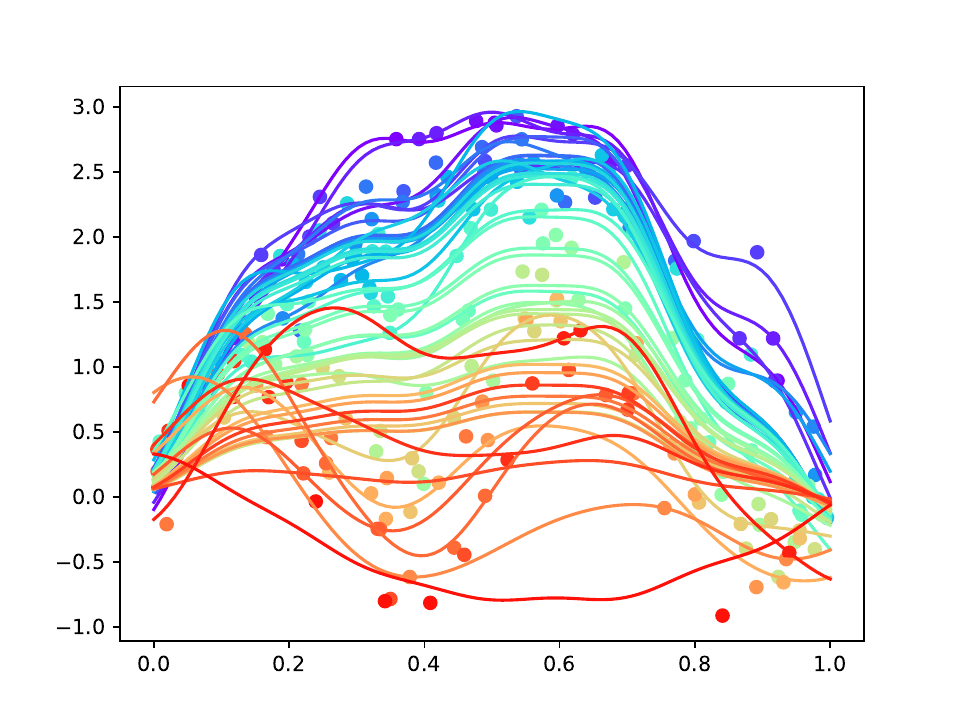}&
    \includegraphics[width=4.5cm]{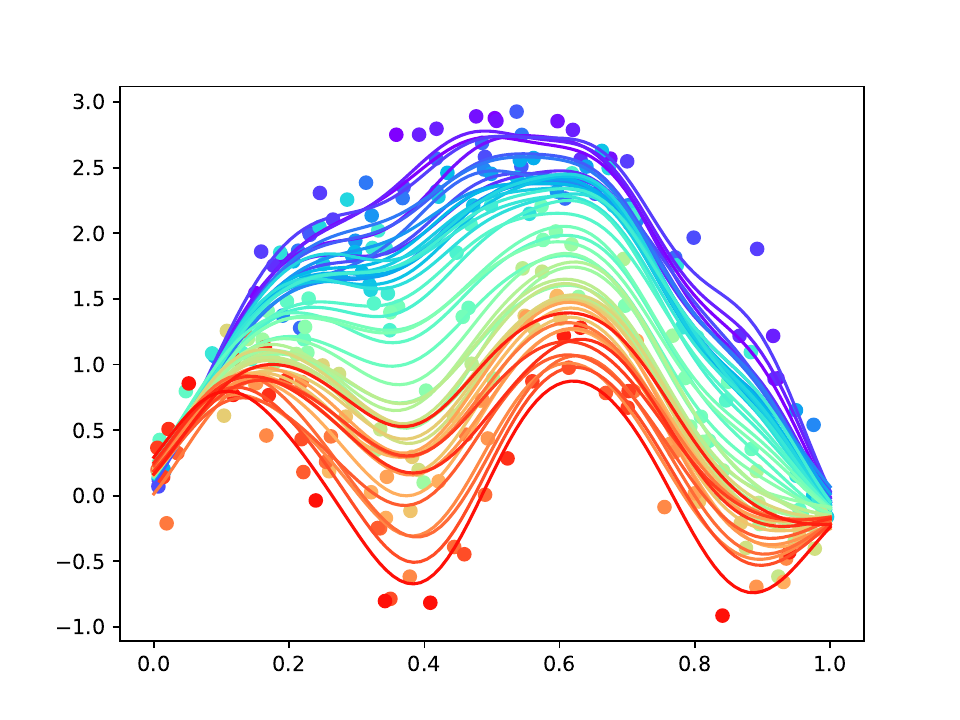}&
    \includegraphics[width=4.5cm]{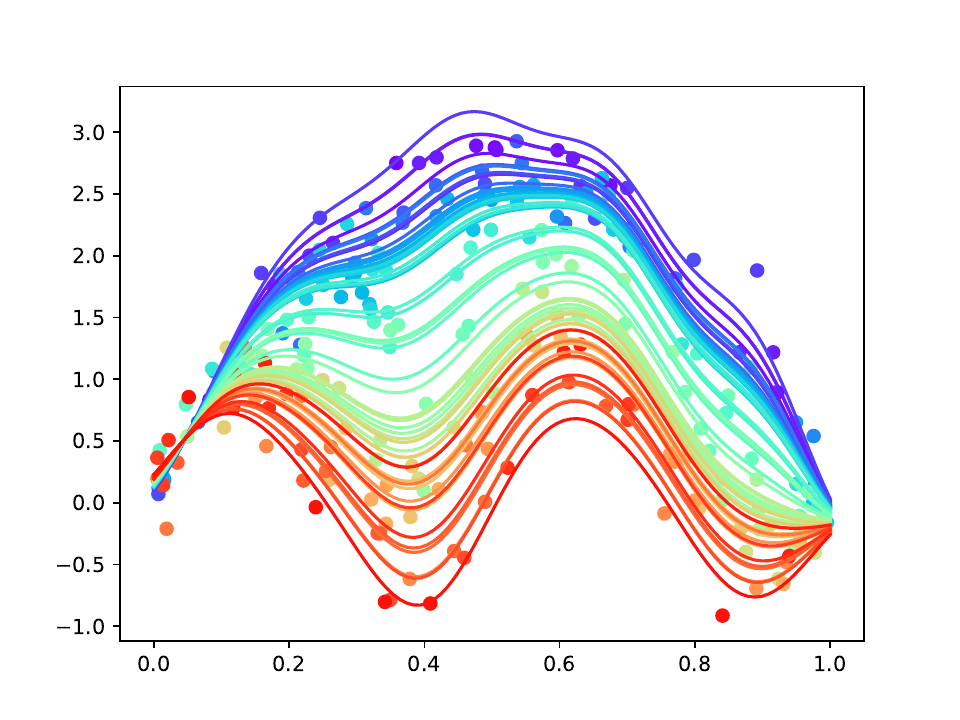} \\
    (d) ICM& (e) HBGP& (f) GP-ePCA
    \end{tabular}
    \caption{Result of artificial dataset using five samples / task for training. The colors of the scatter plot and function indicate a value of $z_i$.}
    \label{fig_fitting}
\end{figure}


\begin{figure}
    \centering
    \begin{tabular}{cc}
    \includegraphics[width=6cm]{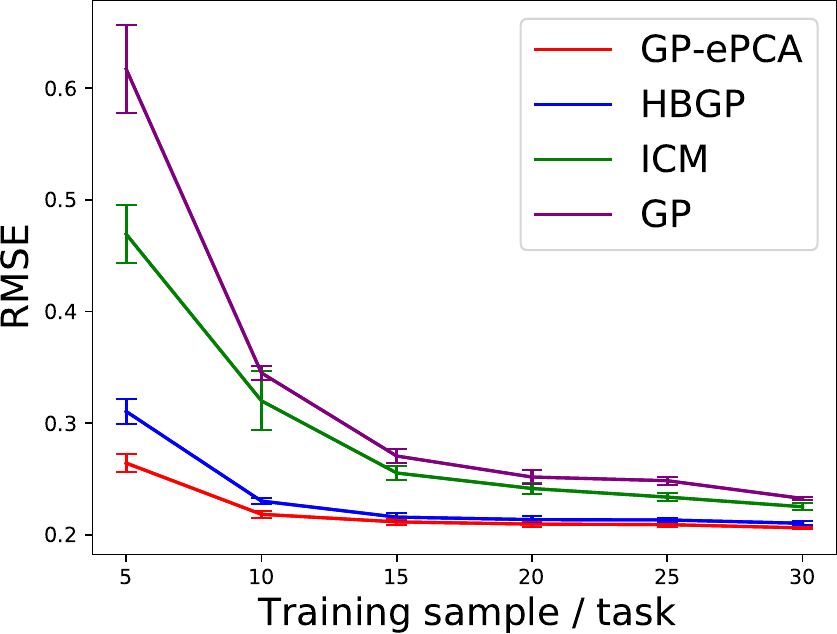}&
    \includegraphics[width=6cm]{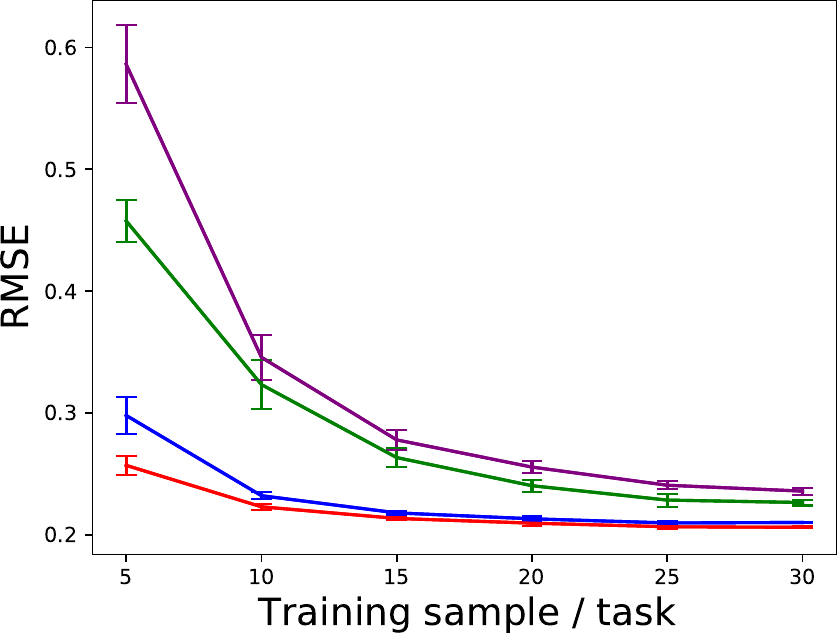} \\
    (a) Training task& (b) Test task \\
    \end{tabular}
    \caption{RMSE of the GP-ePCA and GP for artificial dataset. Line and bar mean average and standard deviation of RMSE, respectively.}
    \label{fig_gene_error_artificial}
\end{figure}

\subsection{The performance of GP-ePCA varying the rank}

Next, we evaluate the impact of the rank of A on RMSEs when a dimension of a set of the true functions differs, or the domain shift between tasks occurs. For this purpose, we use the three artificial datasets. Each artificial dataset is generated by the following equations:
\mbox{}\\
\noindent{\textbf{Artificial data 1:}}\mbox{}
\begin{align*}
    y_{in} &= \sin(2\pi x_{in}) + 3z_i + \varepsilon_{in} \\
    x_{in} &\sim U(0,1)
\end{align*}
\noindent{\textbf{Artificial data 2:}}\mbox{}
\begin{align*}
    y_{in} &= \sin(2\pi (x_{in}+z_i)) +\varepsilon_{in} \\
    x_{in} &\sim U(0,1)
\end{align*}
\noindent{\textbf{Artificial data 3:}}\mbox{}
\begin{align*}
    y_{in} &= \sin(2\pi (x_{in}-z_i)) + 3z_i + \varepsilon_{in} \\
    x_{in} &\sim U(z_i,z_i+1)
\end{align*}
In each dataset, training and test task sizes are set $50$, and training and test data sizes of each task are set $5$. The parameter of a noise is set $\beta\inv = 0.2^2$.

\begin{figure}
    \centering
    \begin{tabular}{ccc}
    \includegraphics[width=4.5cm]{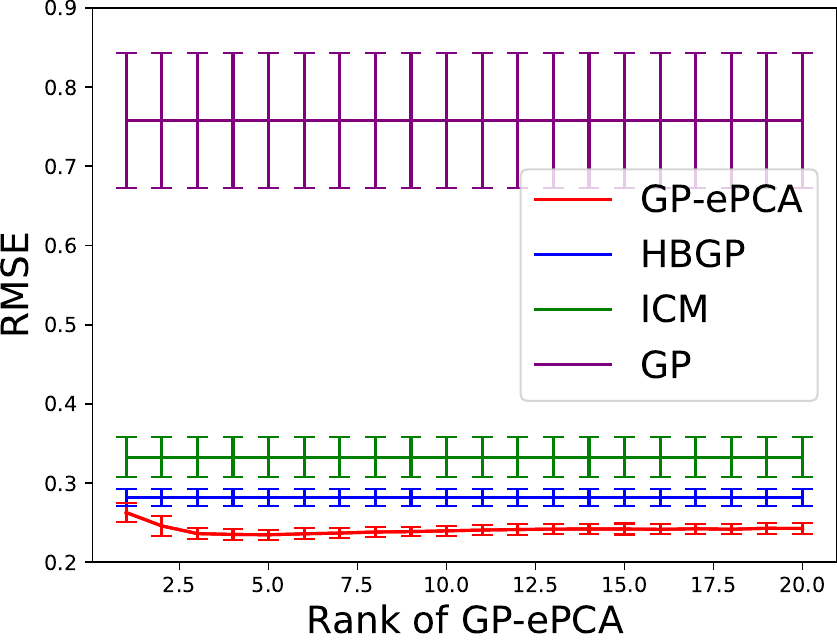}&
    \includegraphics[width=4.5cm]{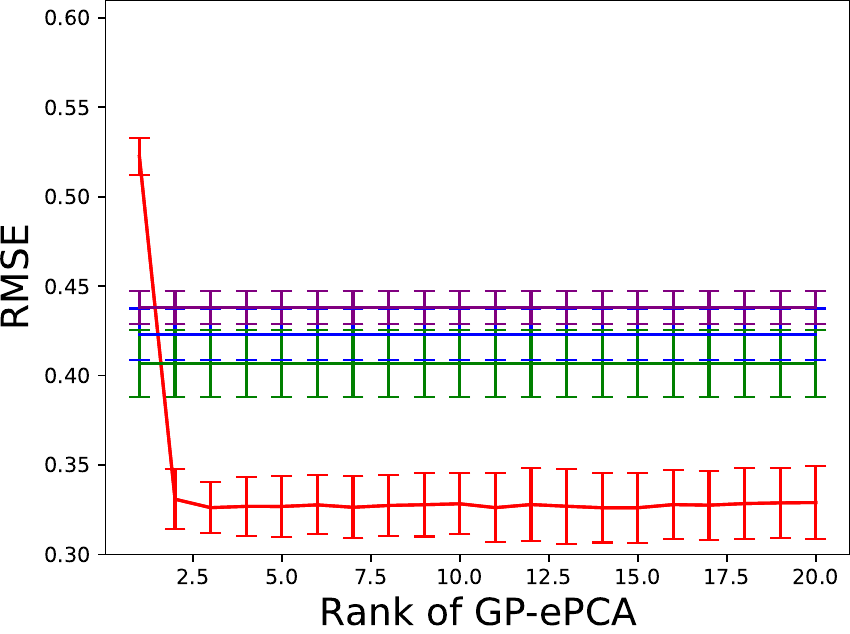}&
    \includegraphics[width=4.5cm]{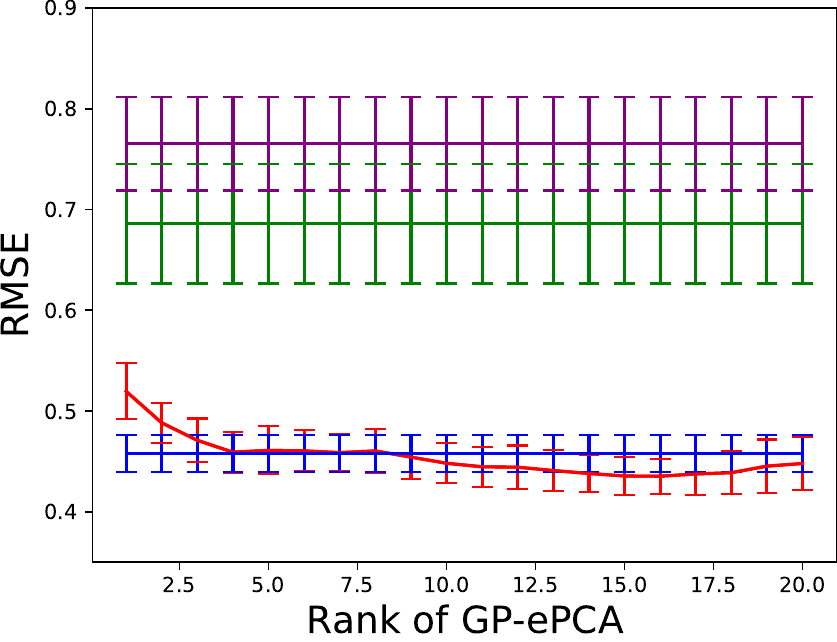} \\
    (a) Artificial data 1& (b) Artificial data 2& (c) Artificial data 3 \\
    \includegraphics[width=4.5cm]{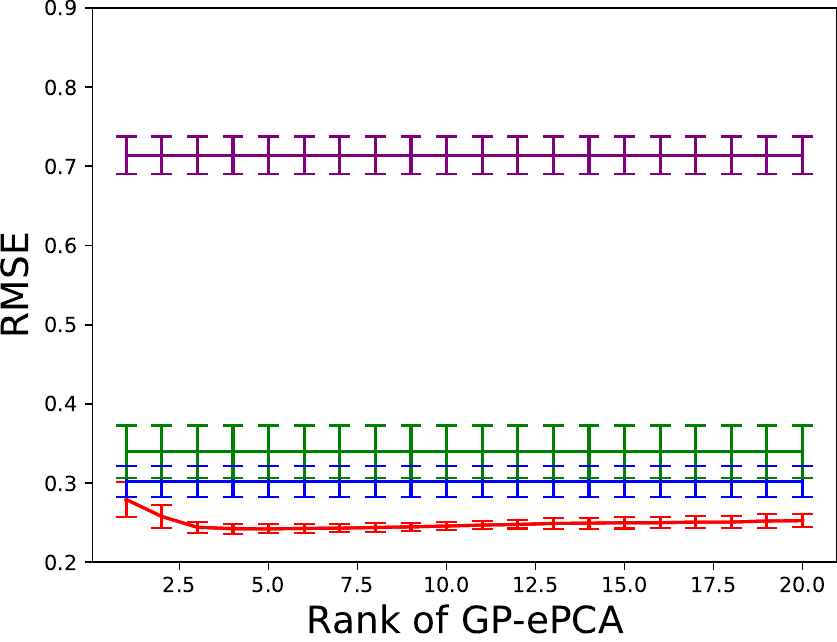}&
    \includegraphics[width=4.5cm]{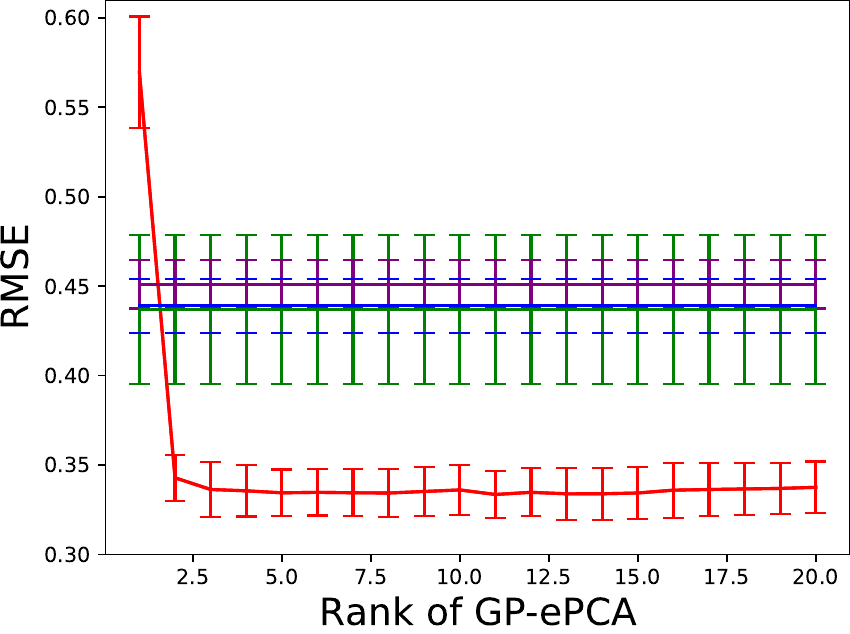}&
    \includegraphics[width=4.5cm]{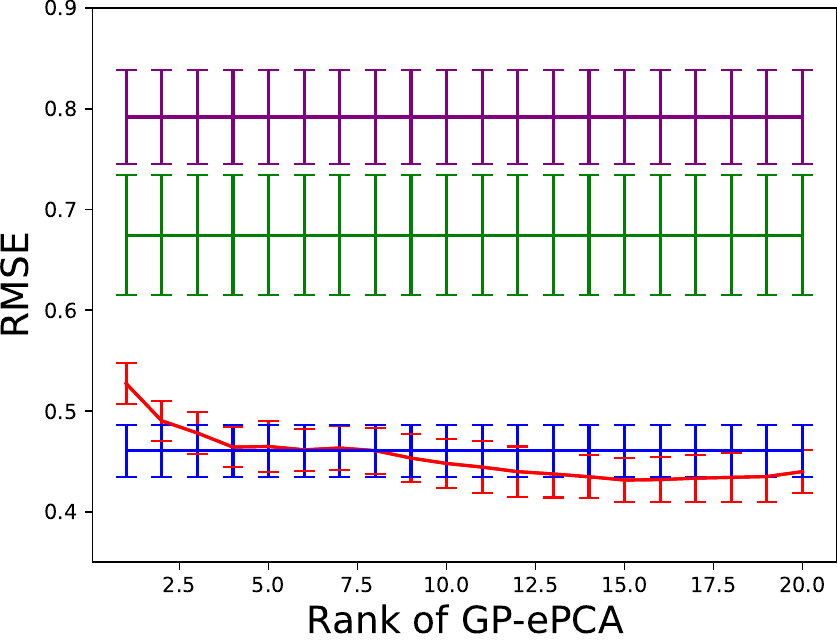} \\
    (d) Artificial data 1& (e) Artificial data 2& (f) Artificial data 3
    \end{tabular}
    \caption{The performance of GP-ePCA varying the rank.(a) -- (c) RMSEs for training tasks. (d) -- (f) RMSEs fo test tasks.}
    \label{fig_dim}
\end{figure}

Figure~\ref{fig_dim} shows a difference of RMSE varying the rank. In Artificial data 1, there is no significant difference in RMSE for any rank. This is because a set of true functions
lies on a one-dimensional subspace on functional space and domain-shift between tasks does not occur. On the other hand, the RMSE is smaller when the rank is greater than 2 in Artificial data 2, since a set of true functions lies on a two-dimensional subspace on functional space. In Artificial data 3, since the domain of each task shifts in conjunction with the function shift, GP posteriors lie on a curve. Therefore, the more the rank is increased, the smaller the RMSE is. From these results, the proposed method tends not to increase the RMSE even when the rank is increased. This is because even if  we increase the rank of the subspace, the RMSE of GP-ePCA becomes about the same as the RMSE of the GP posterior since the input data of GP-ePCA is GP posteriors estimated from datasets.

\subsection{Computational cost of GP-ePCA}
The third experiment is to compare the computational cost of exact GP-ePCA and sparse GP-ePCA by using an artificial data of the first experiment, where the number of inducing points of sparse GP-ePCA is fixed to $20$. We measure the time of each method when the task size is increased and sample size of each task is fixed to $10$. To simplify, neither method estimates hyperparameter.

Figure~\ref{fig_calculation_time} shows the calculation time of each method. While the computational cost of exact GP-ePCA increased fourth power with the task size, that of sparse GP-ePCA increased linearly. For this reason, it is recommended to use sparse GP-ePCA in practice.

\begin{figure}
    \centering
    \includegraphics[width=8cm]{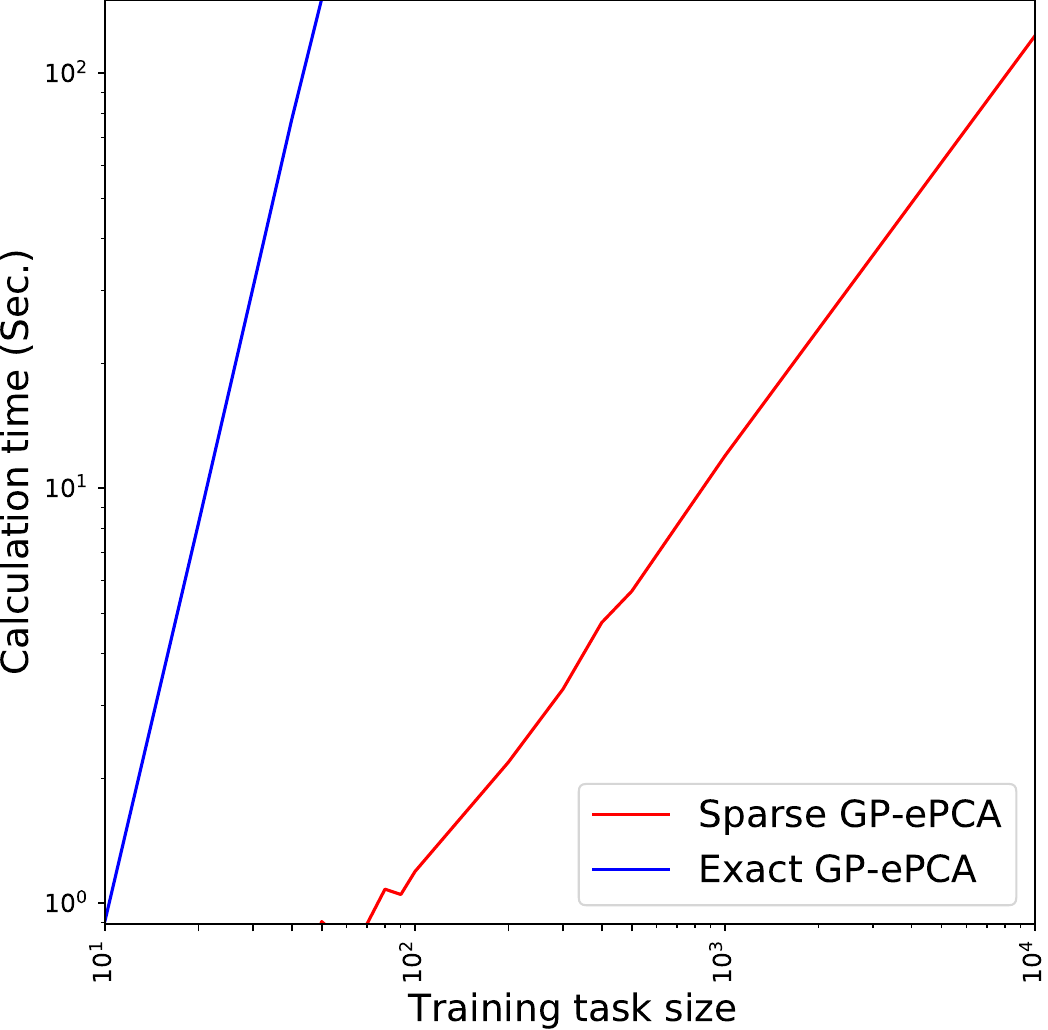}
    \caption{Calculation time of exact GP-ePCA and sparse GP-ePCA.}
    \label{fig_calculation_time}
\end{figure}

\subsection{Evaluation of the performance by using real datasets}

\begin{table}[thb]
  \centering
  \caption{RMSE of each method for real datasets.}
  \scalebox{0.8}{
  \begin{tabular}{l||c|c|c|c}
    \multirow{2}{*}{\diagbox{Methods}{Datasets}} & \multicolumn{2}{c|}{\tt{Computer survey}} & \multicolumn{2}{c}{\tt{Movie lens 100k}} \\ \cline{2-5}
     & Training task & Test task & Training task & Test task \\ \hline \hline
    Single task GP & 3.6665$\pm$0.0941 & 4.2047$\pm$0.0945 & 1.1426$\pm$0.0238 & 1.2718$\pm$0.0467 \\ 
     ICM & 2.0806$\pm$0.0365 & 2.2427$\pm$0.0463 & 1.1206$\pm$0.0182 & 1.2095$\pm$0.0230 \\ 
    HBGP & 2.3082$\pm$0.1067 & 2.2069$\pm$0.0481 & 1.1005$\pm$0.0288 & $\mathbf{1.1416\pm0.0178}$ \\ 
    GP-ePCA(L=1) & 2.1232$\pm$0.0451 & 2.1606$\pm$0.0441 & 1.0847$\pm$0.0241 & 1.1497$\pm$0.0180 \\ 
    GP-ePCA(L=3) & $\mathbf{2.0723\pm0.0382}$ & $\mathbf{2.1605\pm0.0468}$ & $\mathbf{1.0833\pm0.0225}$ & 1.1436$\pm$0.0151 \\ 
    GP-ePCA(L=5) & 2.1037$\pm$0.0546 & 2.1761$\pm$0.0577 & 1.0886$\pm$0.0265 & 1.1422$\pm$0.0177 \\ \hline
    \end{tabular}
  }
  \label{tbl_rmse_real_data}
\end{table}

In this experiment, we demonstrate the effectiveness of the proposed method by using two real datasets: computer survey dataset\footnote{\normalsize https://github.com/probml/pmtk3/tree/master/data/conjointAnalysisComputerBuyers} and movie lens 100k dataset\footnote{\normalsize https://grouplens.org/datasets/movielens/100k/}. The computer survey dataset is a survey data of 190 people who rates the likelihood of purchasing one of 20 personal computers, each of which has 13 features such as price, CPU, and RAM. In this experiment, we consider a task estimating each person's preference for personal computers by regarding each computer's features and each person's rating as input data and output data, respectively. After $100$ tasks are sampled randomly as the training task, $10$ training data are sampled and the rest of the data is used as test data. The rest of the tasks are used for the test task, each of whose dataset divides into $5$ training data and $15$ test data. We calculate the average and standard deviation of RMSE when $T=5$ times iterates.

The movie lens 100k dataset is a rating data of 943 users who rated their preference of 1682 movies. Each movie has a $13$ features (movie id and genres). In this experiment, a task is to predict each user's preference for movies by regarding movie's features and user's ratings are input and output, respectively. After $50$ tasks are sampled randomly as the training task, $10$ training data are sampled and the rest of the data is used as test data. In the test tasks, after $50$ tasks are sampled from the rest of the tasks, $5$ training data are sampled and the rest of the dataset is used for the test data. We calculate the average and standard deviation of RMSE when $T=5$ times iterates.

Table~\ref{tbl_rmse_real_data} shows the RMSE of each method. In the computer survey dataset, the RMSE of GP-ePCA of $L=3$ is the lowest compared to the other methods in both training and test tasks. In particular, although ICM's RMSE is similar to that of the proposed method in training tasks, the RMSE of the proposed method is smaller than that of the ICM in new tasks. In the movie lens 100k dataset, while the RMSE of the proposed method is the smallest of all the methods in training tasks, the RMSE of the HBGP is the smallest of all the methods in test tasks. However, considering the standard deviation, there is no difference between the RMSEs of HBGP and the proposed method, since the discrepancy between the RMSEs of HBGP and the proposed method was within the standard deviation. From these results, our approach is effective for few-shot learning.

\section{Conclusion}
In this study, we proposed a PCA for a set of GP posteriors. Since a structure of a set of GPs is nontrivial, we defined the space of GP posteriors and proved that the space becomes a finite dually flat subspace. Given this fact, PCA for a set of GP posteriors can be regarded as an e-PCA or m-PCA for a set of finite-dimensional multivariate normal distributions. Furthermore, we proposed a fast algorithm, which reduces the calculation order from $\mathcal{O}(IN^3)$ to $\mathcal{O}(Im^3)$, where $N \gg m$. We demonstrated that the proposed algorithm can be applied to multi-task learning and meta-learning.

\subsection*{Acknowledgments}
This work was supported by JSPS KAKENHI grant numbers 17H01793 and 20K19865.

\bibliographystyle{apa}
\bibliography{NECO-21-001-72R1-Reference}

\begin{thebibliography}{}

\bibitem[\protect\astroncite{{Akaho}}{2004}]{Akaho2004}
{Akaho}, S. (2004).
\newblock {The e-PCA and m-PCA}: dimension reduction of parameters by
  information geometry.
\newblock In {\em 2004 IEEE International Joint Conference on Neural Networks
  (IEEE Cat. No.04CH37541)}, volume~1, pages 129--134.

\bibitem[\protect\astroncite{Akaho}{2008}]{Akaho2008}
Akaho, S. (2008).
\newblock Dimension reduction for mixtures of exponential families.
\newblock In K{\r{u}}rkov{\'a}, V., Neruda, R., and Koutn{\'i}k, J., editors,
  {\em Artificial Neural Networks - ICANN 2008}, pages 1--10, Berlin,
  Heidelberg. Springer Berlin Heidelberg.

\bibitem[\protect\astroncite{{{\'A}}lvarez and Lawrence}{2011}]{Alvarez2011}
{{\'A}}lvarez, M.~A. and Lawrence, N.~D. (2011).
\newblock Computationally efficient convolved multiple output {Gaussian}
  processes.
\newblock {\em Journal of Machine Learning Research}, 12(41):1459--1500.

\bibitem[\protect\astroncite{Amari}{2010}]{Amari2010}
Amari, S. (2010).
\newblock Information geometry in optimization, machine learning and
  statistical inference.
\newblock {\em Frontiers of Electrical and Electronic Engineering}, 5(3):241.

\bibitem[\protect\astroncite{Amari}{2016}]{Amari2016}
Amari, S. (2016).
\newblock {\em Information Geometry and Its Applications}.
\newblock Springer Publishing Company, Incorporated, 1st edition.

\bibitem[\protect\astroncite{Bonilla et~al.}{2008}]{Bonilla2007}
Bonilla, E.~V., Chai, K.~M., and Williams, C. (2008).
\newblock Multi-task {Gaussian} process prediction.
\newblock In Platt, J.~C., Koller, D., Singer, Y., and Roweis, S.~T., editors,
  {\em Advances in Neural Information Processing Systems 20}, pages 153--160.
  Curran Associates, Inc.

\bibitem[\protect\astroncite{Chen and Ren}{2009}]{Chen2009}
Chen, T. and Ren, J. (2009).
\newblock Bagging for {Gaussian} process regression.
\newblock {\em Neurocomputing}, 72(7):1605--1610.
\newblock Advances in Machine Learning and Computational Intelligence.

\bibitem[\protect\astroncite{Collins et~al.}{2001}]{Collins2001}
Collins, M., Dasgupta, S., and Schapire, R.~E. (2001).
\newblock A generalization of principal component analysis to the exponential
  family.
\newblock In {\em Proceedings of the 14th International Conference on Neural
  Information Processing Systems: Natural and Synthetic}, NIPS'01, pages
  617--624, Cambridge, MA, USA. MIT Press.

\bibitem[\protect\astroncite{Cuevas-Covarrubias}{2013}]{Covarrubias2013}
Cuevas-Covarrubias, C. (2013).
\newblock Principal components analysis for a {Gaussian} mixture.
\newblock In Lausen, B., den Poel, D.~V., and Ultsch, A., editors, {\em
  Algorithms from and for Nature and Life}, pages 175--183, Cham. Springer
  International Publishing.

\bibitem[\protect\astroncite{Flennerhag et~al.}{2019}]{Flennerhag2019}
Flennerhag, S., Moreno, P.~G., Lawrence, N.~D., and Damianou, A.~C. (2019).
\newblock Transferring knowledge across learning processes.
\newblock In {\em 7th International Conference on Learning Representations,
  {ICLR} 2019, New Orleans, LA, USA, May 6-9, 2019}. OpenReview.net.

\bibitem[\protect\astroncite{Fortuin et~al.}{2020}]{Fortuin2020}
Fortuin, V., Strathmann, H., and Rätsch, G. (2020).
\newblock Meta-learning mean functions for {Gaussian} processes.

\bibitem[\protect\astroncite{Hofmann}{2001}]{Hofmann2001}
Hofmann, T. (2001).
\newblock Unsupervised learning by probabilistic latent semantic analysis.
\newblock {\em Machine Learning}, 42(1):177--196.

\bibitem[\protect\astroncite{Hospedales et~al.}{2020}]{Hospedales2020}
Hospedales, T.~M., Antoniou, A., Micaelli, P., and Storkey, A.~J. (2020).
\newblock Meta-learning in neural networks: {A} survey.
\newblock {\em CoRR}, abs/2004.05439.

\bibitem[\protect\astroncite{Huisman et~al.}{2021}]{Huisman2021}
Huisman, M., van Rijn, J.~N., and Plaat, A. (2021).
\newblock A survey of deep meta-learning.
\newblock {\em Artificial Intelligence Review}.

\bibitem[\protect\astroncite{Li and Chen}{2018}]{Li2018}
Li, P. and Chen, S. (2018).
\newblock Hierarchical {Gaussian} processes model for multi-task learning.
\newblock {\em Pattern Recognition}, 74:134--144.

\bibitem[\protect\astroncite{Liu et~al.}{2020}]{Liu2020}
Liu, H., Ong, Y.-S., Shen, X., and Cai, J. (2020).
\newblock When {Gaussian} process meets big data: A review of scalable gps.
\newblock {\em IEEE Transactions on Neural Networks and Learning Systems},
  31(11):4405--4423.
\newblock cited By 30.

\bibitem[\protect\astroncite{Nielsen}{2020}]{Nielsen2020}
Nielsen, F. (2020).
\newblock An elementary introduction to information geometry.
\newblock {\em Entropy}, 22(10).

\bibitem[\protect\astroncite{Rasmussen and Williams}{2005}]{Rasmussen2005}
Rasmussen, C. and Williams, C. (2005).
\newblock {\em Gaussian Processes for Machine Learning}.
\newblock Adaptive Computation And Machine Learning. MIT Press.

\bibitem[\protect\astroncite{Rothfuss et~al.}{2021}]{Rothfuss2021}
Rothfuss, J., Fortuin, V., Josifoski, M., and Krause, A. (2021).
\newblock Pacoh: Bayes-optimal meta-learning with pac-guarantees.
\newblock In Meila, M. and Zhang, T., editors, {\em Proceedings of the 38th
  International Conference on Machine Learning}, volume 139 of {\em Proceedings
  of Machine Learning Research}, pages 9116--9126. PMLR.

\bibitem[\protect\astroncite{Schwaighofer et~al.}{2005}]{Schwaighofer2005}
Schwaighofer, A., Tresp, V., and Yu, K. (2005).
\newblock Learning {Gaussian} process kernels via hierarchical bayes.
\newblock In Saul, L., Weiss, Y., and Bottou, L., editors, {\em Advances in
  Neural Information Processing Systems}, volume~17. MIT Press.

\bibitem[\protect\astroncite{Shang}{2014}]{Shang2014}
Shang, H.~L. (2014).
\newblock A survey of functional principal component analysis.
\newblock {\em AStA Advances in Statistical Analysis}, 98(2):121--142.

\bibitem[\protect\astroncite{Srijith and Shevade}{2014}]{Srijith2014}
Srijith, P.~K. and Shevade, S. (2014).
\newblock Gaussian process multi-task learning using joint feature selection.
\newblock In Calders, T., Esposito, F., H{\"u}llermeier, E., and Meo, R.,
  editors, {\em Machine Learning and Knowledge Discovery in Databases}, pages
  98--113, Berlin, Heidelberg. Springer Berlin Heidelberg.

\bibitem[\protect\astroncite{Takano et~al.}{2016}]{Takano2016}
Takano, K., Hino, H., Akaho, S., and Murata, N. (2016).
\newblock Nonparametric e-mixture estimation.
\newblock {\em Neural Computation}, 28(12):2687--2725.

\bibitem[\protect\astroncite{Titsias}{2009}]{Titsias2009}
Titsias, M. (2009).
\newblock Variational learning of inducing variables in sparse {Gaussian}
  processes.
\newblock In van Dyk, D. and Welling, M., editors, {\em Proceedings of the
  Twelth International Conference on Artificial Intelligence and Statistics},
  volume~5 of {\em Proceedings of Machine Learning Research}, pages 567--574,
  Hilton Clearwater Beach Resort, Clearwater Beach, Florida USA. PMLR.

\bibitem[\protect\astroncite{Titsias and
  L\'{a}zaro-Gredilla}{2011}]{Titsias2011}
Titsias, M. and L\'{a}zaro-Gredilla, M. (2011).
\newblock Spike and slab variational inference for multi-task and multiple
  kernel learning.
\newblock In Shawe-Taylor, J., Zemel, R., Bartlett, P., Pereira, F., and
  Weinberger, K.~Q., editors, {\em Advances in Neural Information Processing
  Systems}, volume~24. Curran Associates, Inc.

\bibitem[\protect\astroncite{Vilalta and Drissi}{2002}]{Vilalta2002}
Vilalta, R. and Drissi, Y. (2002).
\newblock A perspective view and survey of meta-learning.
\newblock {\em Artificial Intelligence Review}, 18:77--95.

\bibitem[\protect\astroncite{Wang et~al.}{2020}]{Wang2020}
Wang, Y., Yao, Q., T.Kwok, J., and Ni, L.~M. (2020).
\newblock Generalizing from a few examples: A survey on few-shot learning.
\newblock {\em ACM Comput. Surv.}, 53(3).

\bibitem[\protect\astroncite{Waytowich et~al.}{2016}]{Waytowich2016}
Waytowich, N.~R., Lawhern, V.~J., Bohannon, A.~W., Ball, K.~R., and Lance,
  B.~J. (2016).
\newblock Spectral transfer learning using information geometry for a
  user-independent brain-computer interface.
\newblock {\em Frontiers in Neuroscience (Online)}, 10.

\bibitem[\protect\astroncite{Woodbury}{1950}]{Woodbury1950}
Woodbury, M.~A. (1950).
\newblock Inverting modified matrices.
\newblock Statistical Research Group, Memo. Rep.~42, Princeton University,
  Princeton, N. J.

\bibitem[\protect\astroncite{Yu et~al.}{2005}]{Yu2005}
Yu, K., Tresp, V., and Schwaighofer, A. (2005).
\newblock Learning {Gaussian} processes from multiple tasks.
\newblock In {\em Machine Learning: Proceedings of the 22nd International
  Conference (ICML 2005)}, pages 1012--1019. ACM.

\bibitem[\protect\astroncite{Zhang and Yang}{2017}]{Zhang2017}
Zhang, Y. and Yang, Q. (2017).
\newblock {An overview of multi-task learning}.
\newblock {\em National Science Review}, 5(1):30--43.

\end{thebibliography}

\appendix
\section{Woodbury's matrix inversion Lemma and its derived Lemma}
\label{sec_appendix_woodbury}
\begin{lemma}[\citet{Woodbury1950}]
Let $\mathbf{A}$, $\mathbf{U}$, $\mathbf{B}$, and $\mathbf{V}$ be arbitrary $N \times N$, $N \times M$, $M \times M$, and $M \times N$ matrices, respectively. Suppose that there are inverse matrices of $\mathbf{A}$ and $\mathbf{B}$. Then, the following equation holds.
\begin{align*}
    (\mathbf{A}+\mathbf{U}\mathbf{B}\mathbf{V})\inv = \mathbf{A}\inv-\mathbf{A}\inv\mathbf{U}(\mathbf{B}\inv+\mathbf{V}\mathbf{A}\inv\mathbf{U})\inv\mathbf{V}\mathbf{A}\inv
\end{align*}
\label{lemma_woodbury}
\end{lemma}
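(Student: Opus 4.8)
The plan is to prove the identity by direct verification rather than by deriving the right-hand side: I take the claimed inverse as given and confirm that its product with $\mathbf{A}+\mathbf{U}\mathbf{B}\mathbf{V}$ is the identity. For brevity write $\mathbf{C}:=(\mathbf{B}\inv+\mathbf{V}\mathbf{A}\inv\mathbf{U})\inv$, so that the claimed inverse is $\mathbf{X}:=\mathbf{A}\inv-\mathbf{A}\inv\mathbf{U}\mathbf{C}\mathbf{V}\mathbf{A}\inv$. I would flag at the outset that the formula tacitly presupposes that $\mathbf{B}\inv+\mathbf{V}\mathbf{A}\inv\mathbf{U}$ is itself invertible, in addition to the stated invertibility of $\mathbf{A}$ and $\mathbf{B}$, so that $\mathbf{C}$ is well defined. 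The first step is to form the product $(\mathbf{A}+\mathbf{U}\mathbf{B}\mathbf{V})\mathbf{X}$, expand it into four terms, and use $\mathbf{A}\mathbf{A}\inv=\mathbf{I}$ to simplify the two leading terms.

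After expansion the product reads $\mathbf{I}-\mathbf{U}\mathbf{C}\mathbf{V}\mathbf{A}\inv+\mathbf{U}\mathbf{B}\mathbf{V}\mathbf{A}\inv-\mathbf{U}\mathbf{B}\mathbf{V}\mathbf{A}\inv\mathbf{U}\mathbf{C}\mathbf{V}\mathbf{A}\inv$. The central step is to factor $\mathbf{U}$ on the left and $\mathbf{V}\mathbf{A}\inv$ on the right out of the three non-identity terms, collapsing them to $\mathbf{U}\bigl[\mathbf{B}-(\mathbf{I}+\mathbf{B}\mathbf{V}\mathbf{A}\inv\mathbf{U})\mathbf{C}\bigr]\mathbf{V}\mathbf{A}\inv$, where I have grouped $-\mathbf{C}-\mathbf{B}\mathbf{V}\mathbf{A}\inv\mathbf{U}\mathbf{C}=-(\mathbf{I}+\mathbf{B}\mathbf{V}\mathbf{A}\inv\mathbf{U})\mathbf{C}$.

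I would then invoke the algebraic observation $\mathbf{I}+\mathbf{B}\mathbf{V}\mathbf{A}\inv\mathbf{U}=\mathbf{B}(\mathbf{B}\inv+\mathbf{V}\mathbf{A}\inv\mathbf{U})=\mathbf{B}\mathbf{C}\inv$, so that $(\mathbf{I}+\mathbf{B}\mathbf{V}\mathbf{A}\inv\mathbf{U})\mathbf{C}=\mathbf{B}\mathbf{C}\inv\mathbf{C}=\mathbf{B}$. The bracketed factor therefore equals $\mathbf{B}-\mathbf{B}=\mathbf{0}$, the entire product collapses to $\mathbf{I}$, and hence $\mathbf{X}$ is a right inverse of $\mathbf{A}+\mathbf{U}\mathbf{B}\mathbf{V}$. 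Since this matrix is square, a right inverse is automatically a two-sided inverse, which gives the identity. As a cross-check I would also run the symmetric computation $\mathbf{X}(\mathbf{A}+\mathbf{U}\mathbf{B}\mathbf{V})=\mathbf{I}$, which proceeds identically but uses the mirror identity $\mathbf{I}+\mathbf{V}\mathbf{A}\inv\mathbf{U}\mathbf{B}=(\mathbf{B}\inv+\mathbf{V}\mathbf{A}\inv\mathbf{U})\mathbf{B}=\mathbf{C}\inv\mathbf{B}$.

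I do not anticipate a genuine obstacle here, since the argument is a verification rather than a discovery. The only delicate points are bookkeeping in the noncommutative expansion — keeping the outer factors $\mathbf{U}$ and $\mathbf{V}\mathbf{A}\inv$ correctly placed so that the inner bracket is unambiguous — and making explicit the auxiliary invertibility of $\mathbf{B}\inv+\mathbf{V}\mathbf{A}\inv\mathbf{U}$ on which the statement silently depends.
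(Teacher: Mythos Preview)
Your verification is correct and complete: the expansion, the factorization $\mathbf{U}[\cdots]\mathbf{V}\mathbf{A}\inv$, and the key identity $\mathbf{I}+\mathbf{B}\mathbf{V}\mathbf{A}\inv\mathbf{U}=\mathbf{B}\mathbf{C}\inv$ are all sound, and your remark about the tacit invertibility of $\mathbf{B}\inv+\mathbf{V}\mathbf{A}\inv\mathbf{U}$ is a genuine caveat the statement omits. The paper itself does not prove this lemma at all --- it simply cites it as the classical Woodbury identity and uses it as a tool --- so there is no paper proof to compare against; your direct-verification argument is the standard one and entirely adequate here.
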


\begin{lemma}
In Lemma~\ref{lemma_woodbury}, when $N=M$ and $\vK:=\mathbf{A}=\mathbf{U}=\mathbf{V}$, the following equation holds.
\begin{align*}
    (\mathbf{K}+\mathbf{K}\mathbf{B}\mathbf{K})\inv = \mathbf{K}\inv-(\mathbf{B}\inv+\mathbf{K})\inv
\end{align*}
\label{lemma_woodbury2}
\end{lemma}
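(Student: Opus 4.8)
The plan is to obtain this identity as a direct specialization of the Woodbury identity in Lemma~\ref{lemma_woodbury}, so no independent argument is really needed: the statement is exactly what the general formula reduces to once the four matrices $\mathbf{A},\mathbf{U},\mathbf{B},\mathbf{V}$ collapse to a single $\mathbf{K}$. The whole proof is therefore a substitution followed by a simplification.

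First I would set $\mathbf{A}=\mathbf{U}=\mathbf{V}=\mathbf{K}$ in the right-hand side of Lemma~\ref{lemma_woodbury}. This is legitimate because the hypotheses there require only that $\mathbf{A}$ and $\mathbf{B}$ be invertible, i.e. here that $\mathbf{K}$ and $\mathbf{B}$ be invertible, which is exactly what is assumed. Substituting gives
\begin{align*}
    (\mathbf{K}+\mathbf{K}\mathbf{B}\mathbf{K})\inv = \mathbf{K}\inv-\mathbf{K}\inv\mathbf{K}\left(\mathbf{B}\inv+\mathbf{K}\mathbf{K}\inv\mathbf{K}\right)\inv\mathbf{K}\mathbf{K}\inv.
\end{align*}
I would then simplify using $\mathbf{K}\inv\mathbf{K}=\mathbf{K}\mathbf{K}\inv=\mathbf{I}$, which in particular reduces the inner term to $\mathbf{K}\mathbf{K}\inv\mathbf{K}=\mathbf{K}$, so that the parenthesized factor becomes $\mathbf{B}\inv+\mathbf{K}$ while the flanking factors $\mathbf{K}\inv\mathbf{K}$ and $\mathbf{K}\mathbf{K}\inv$ each become the identity. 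The right-hand side then collapses to $\mathbf{K}\inv-(\mathbf{B}\inv+\mathbf{K})\inv$, which is precisely the claimed equation.

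There is no genuine obstacle here: the entire content is the bookkeeping that $\mathbf{V}\mathbf{A}\inv\mathbf{U}=\mathbf{K}\mathbf{K}\inv\mathbf{K}=\mathbf{K}$, so the inner inverse $(\mathbf{B}\inv+\mathbf{V}\mathbf{A}\inv\mathbf{U})\inv$ appearing in Lemma~\ref{lemma_woodbury} automatically equals $(\mathbf{B}\inv+\mathbf{K})\inv$. The only point worth stating explicitly is that the invertibility of $\mathbf{B}\inv+\mathbf{K}$ is inherited from the applicability of the parent lemma rather than being an extra assumption that must be imposed afresh.
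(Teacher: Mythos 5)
Your proof is correct and is exactly the route the paper intends: Lemma~\ref{lemma_woodbury2} is stated as an immediate specialization of Lemma~\ref{lemma_woodbury} (the paper gives no separate proof), and your substitution $\mathbf{A}=\mathbf{U}=\mathbf{V}=\mathbf{K}$ followed by the simplifications $\mathbf{K}\inv\mathbf{K}=\mathbf{K}\mathbf{K}\inv=\mathbf{I}$ and $\mathbf{K}\mathbf{K}\inv\mathbf{K}=\mathbf{K}$ is precisely the required bookkeeping. Your remark that invertibility of $\mathbf{B}\inv+\mathbf{K}$ is inherited from the parent lemma is a correct and worthwhile clarification.
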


\begin{lemma}
Let $\vK_+$, $\vK$, and $\vV$ be arbitrary $M \times N$, $N \times N$, and $N \times N$ matrices, respectively. Suppose that there are inverse matrices of $\vK$ and $\vV$. Then, the following equation holds:
\begin{align*}
    \vSigma_+\vSigma\inv=\vK_+\vK\inv,
\end{align*}
where $\vSigma_+:=\vK_+ + \vK_+\vV\vK$ and $\vSigma:=\vK+\vK\vV\vK$.
\label{lemma_woodbury3}
\end{lemma}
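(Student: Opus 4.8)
The plan is to factor a common right-hand matrix out of both $\vSigma_+$ and $\vSigma$. First I would note that $\vK_+\vV\vK = \vK_+(\vV\vK)$ and $\vK\vV\vK = \vK(\vV\vK)$, so that $\vSigma_+ = \vK_+(\vI+\vV\vK)$ and $\vSigma = \vK(\vI+\vV\vK)$, where $\vI$ denotes the $N\times N$ identity. The key structural fact is that the same $N\times N$ factor $\vI+\vV\vK$ stands on the right in both products, while the differing left factors are exactly $\vK_+$ and $\vK$, whose quotient $\vK_+\vK\inv$ is the target of the identity.

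Second, I would justify that $\vI+\vV\vK$ is invertible, so that the cancellation is legitimate. Although this does not follow from invertibility of $\vK$ and $\vV$ alone (for instance $\vV=-\vK\inv$ makes it singular), the statement presupposes that $\vSigma\inv$ exists; combined with $\vSigma = \vK(\vI+\vV\vK)$ and $\vK$ invertible, this forces $\vI+\vV\vK = \vK\inv\vSigma$ to be a product of invertible matrices, hence invertible. Then $\vSigma\inv = (\vI+\vV\vK)\inv\vK\inv$, and multiplying gives $\vSigma_+\vSigma\inv = \vK_+(\vI+\vV\vK)(\vI+\vV\vK)\inv\vK\inv = \vK_+\vK\inv$, as claimed.

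The computation itself is immediate once the factorization is seen, so there is no serious obstacle; the only points requiring care are dimensional. Because $\vK_+$ is $M\times N$ and need not be square, no left-cancellation is available, and it is essential that the shared factor be pulled out on the right rather than the left. It is worth noting that invertibility of $\vV$ is not actually used by this argument---only invertibility of $\vK$ and of $\vSigma$---so the stated hypothesis is stronger than necessary. Alternatively, the same conclusion can be reached more laboriously by substituting $\vSigma\inv = \vK\inv - (\vV\inv+\vK)\inv$ from Lemma~\ref{lemma_woodbury2}, expanding $\vSigma_+\vSigma\inv$ term by term, and using $\vI+\vV\vK = \vV(\vV\inv+\vK)$ to cancel the three residual terms; the direct factorization is cleaner, and I would present that.
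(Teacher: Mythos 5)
Your proof is correct and follows essentially the same route as the paper's: factor out the common right factor $\vI+\vV\vK$ so that $\vSigma_+\vSigma\inv=\vK_+(\vI+\vV\vK)(\vI+\vV\vK)\inv\vK\inv=\vK_+\vK\inv$. Your explicit justification that $\vI+\vV\vK$ is invertible (deduced from the assumed existence of $\vSigma\inv$ rather than from the invertibility of $\vK$ and $\vV$) is a welcome bit of care that the paper's version leaves implicit.
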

\begin{proof}
\begin{align*}
    \vSigma_+\vSigma\inv=&(\vK_+ + \vK_+\vV\vK)(\vK+\vK\vV\vK)\inv \\
    =&\vK_+(\vI + \vV\vK)(\vI+\vV\vK)\inv\vK\inv \\
    =&\vK_+\vK\inv
\end{align*}
\end{proof}

\begin{lemma}
Let $\vK$ and $\vV$ be $N \times N$ and $N \times N$ non-singular matrices, respectively, and let $\vK_\ast$ and $\vK_{\ast\ast}$ be arbitrary $M \times N$ and $M \times M$ matrices, respectively, including $\vK$ as a sub-matrix, where $M(>N)$. Then, the following equation holds:
\begin{align}
    \vTheta_{\ast\ast}\vK_\ast\vK\inv\vTheta\inv = \vK_{\ast\ast}\inv\vK_\ast,
\end{align}
where $\vTheta_{\ast\ast}:=(\vK_{\ast\ast} + \vK_\ast\vV\vK_\ast)\inv$, $\vTheta:=(\vK+\vK\vV\vK)\inv$.
\label{lemma_woodbury4}
\end{lemma}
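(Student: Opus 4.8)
The plan is to strip off the two inverses by the same factoring used in the proof of Lemma~\ref{lemma_woodbury3}, after first turning the hypothesis that ``$\vK_\ast$ and $\vK_{\ast\ast}$ include $\vK$ as a sub-matrix'' into a clean algebraic statement. (For the dimensions to match, the middle term must be read as $\vK_\ast\vV\vK_\ast\T$, so that $\vTheta_{\ast\ast}\inv=\vK_{\ast\ast}+\vK_\ast\vV\vK_\ast\T$ is genuinely $M\times M$.) Writing $X_\ast=X\cup X_+$ with the points of $X$ listed first (after relabelling if necessary), we have $\vK_{\ast\ast}=k(X_\ast,X_\ast)$, $\vK_\ast=k(X_\ast,X)$ and $\vK=k(X,X)$, so $\vK_\ast$ is precisely the block of columns of $\vK_{\ast\ast}$ indexed by $X$. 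Introducing the $M\times N$ selection matrix $\mathbf{E}=(\vI,\vZero)\T$, this is exactly
\begin{align*}
    \vK_\ast=\vK_{\ast\ast}\mathbf{E}, \qquad \vK_\ast\T\mathbf{E}=\vK,
\end{align*}
the second identity recording that retaining in $\vK_\ast\T=k(X,X_\ast)$ only the columns indexed by $X$ returns $\vK$. These two relations are the whole of the sub-matrix hypothesis that the argument needs.

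Next I would reduce the claim to an inverse-free identity: multiplying the asserted equation on the left by $\vTheta_{\ast\ast}\inv$, it suffices to prove
\begin{align*}
    \vK_\ast\vK\inv\vTheta\inv=\vTheta_{\ast\ast}\inv\vK_{\ast\ast}\inv\vK_\ast.
\end{align*}
On the left I substitute $\vTheta\inv=\vK+\vK\vV\vK=\vK(\vI+\vV\vK)$, so that $\vK\inv\vTheta\inv=\vI+\vV\vK$ and the left-hand side collapses to $\vK_\ast(\vI+\vV\vK)=\vK_\ast+\vK_\ast\vV\vK$. On the right I first use $\vK_\ast=\vK_{\ast\ast}\mathbf{E}$ to get $\vK_{\ast\ast}\inv\vK_\ast=\mathbf{E}$, and then expand
\begin{align*}
    \vTheta_{\ast\ast}\inv\mathbf{E}=(\vK_{\ast\ast}+\vK_\ast\vV\vK_\ast\T)\mathbf{E}=\vK_{\ast\ast}\mathbf{E}+\vK_\ast\vV(\vK_\ast\T\mathbf{E})=\vK_\ast+\vK_\ast\vV\vK,
\end{align*}
invoking both selection relations. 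The two sides coincide, which establishes the reduced identity and hence the Lemma.

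The only genuinely delicate point is the first step: making precise, and justifying, the passage from the informal ``includes $\vK$ as a sub-matrix'' to the two matrix relations $\vK_\ast=\vK_{\ast\ast}\mathbf{E}$ and $\vK_\ast\T\mathbf{E}=\vK$. Once these are in hand everything reduces to the single cancellation $\vK(\vI+\vV\vK)$ already exploited in Lemma~\ref{lemma_woodbury3}, and I expect no further obstacle. I would also remark that $\vV$ enters only through products and is never inverted, so its non-singularity is not actually used; what the objects require is the invertibility of $\vK$ and $\vK_{\ast\ast}$ together with that of the two bracketed matrices defining $\vTheta$ and $\vTheta_{\ast\ast}$.
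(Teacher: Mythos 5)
Your proof is correct, but it follows a genuinely different route from the paper's. The paper attacks $\vTheta_{\ast\ast}=(\vK_{\ast\ast}+\vK_\ast\vV\vK_\ast\T)\inv$ head-on with the Woodbury identity (Lemma~\ref{lemma_woodbury}), silently invoking $\vK_\ast\T\vK_{\ast\ast}\inv\vK_\ast=\vK$ twice to collapse the resulting expression, and then cancels against $\vTheta\inv=\vK+\vK\vV\vK$ via Lemma~\ref{lemma_woodbury2}; this keeps the computation uniform with the other appendix lemmas but requires inverting $\vV$ and is prone to sign slips in the Woodbury expansion. You instead multiply the claim by $\vTheta_{\ast\ast}\inv$ on the left to obtain the inverse-free identity $\vK_\ast\vK\inv\vTheta\inv=\vTheta_{\ast\ast}\inv\vK_{\ast\ast}\inv\vK_\ast$, and verify it by direct expansion using the selection-matrix relations $\vK_\ast=\vK_{\ast\ast}\mathbf{E}$ and $\vK_\ast\T\mathbf{E}=\vK$. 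Your approach buys three things: it is more elementary (no Woodbury at all), it never inverts $\vV$, so you correctly observe that the non-singularity hypothesis on $\vV$ is superfluous (the paper's proof does use $\vV\inv$), and it makes explicit the precise algebraic content of the informal hypothesis ``including $\vK$ as a sub-matrix'' --- which, as you note, must really be the compatibility relations $\vK_\ast=\vK_{\ast\ast}\mathbf{E}$ and $\mathbf{E}\T\vK_\ast=\vK$ rather than mere containment for \emph{arbitrary} matrices; the paper relies on exactly these relations (in the form $\vK_\ast\T\vK_{\ast\ast}\inv\vK_\ast=\vK$) without stating them. Your reading of the middle term as $\vK_\ast\vV\vK_\ast\T$ is also the correct repair of the dimension mismatch in the statement, and it matches the paper's own first displayed line.
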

\begin{proof}
\begin{align*}
    &(\vK_{\ast\ast}+\vK_\ast\vV{\vK_\ast}\T)\inv\vK_\ast\vK\inv(\vK+\vK\vV\vK) \\
    =&(\vK_{\ast\ast}\inv\vK_\ast+\vK_{\ast\ast}\inv\vK_\ast(\vV\inv+\vK)\inv\vK_\ast\T\vK_{\ast\ast}\inv\vK_\ast)\vK\inv(\vK+\vK\vV\vK) \\
    =&(\vK_{\ast\ast}\inv\vK_\ast+\vK_{\ast\ast}\inv\vK_\ast(\vV\inv+\vK)\inv\vK)\vK\inv(\vK+\vK\vV\vK) \\
    =&\vK_{\ast\ast}\inv\vK_\ast(\vI+(\vV\inv+\vK)\inv\vK)\vK\inv\{(\vK+\vK\vV\vK)\inv\}\inv \\
    =&\vK_{\ast\ast}\inv\vK_\ast(\vK\inv+(\vV\inv+\vK)\inv)(\vK\inv+(\vV\inv+\vK)\inv)\inv \\
    =&\vK_{\ast\ast}\inv\vK_\ast
\end{align*}
\end{proof}

\section{Proof of Lemmas}
\label{sec_appendix_proof_lemmas}

\subsection{Proof of Lemma~\ref{lemma_e_flatness}}
For any $\vrho=\{\vmu,\vSigma\}$, the natural parameter $\vxi^\ast(\vrho)=(\vtheta_\ast,{\rm vec}(\vTheta_{\ast\ast})) \in \cT^\ast$ represented as follows:
\begin{align*}
    \vtheta_\ast &= \vSigma\inv_{\ast\ast}\vmu_\ast, \\
    \vTheta_{\ast\ast} &= \vSigma\inv_{\ast\ast},
\end{align*}
where $\vmu_\ast=\vmu_{\ast 0}+\vK_\ast\vK\inv(\vmu-\vmu_0)$ and $\vSigma_{\ast\ast}=\vK_{\ast\ast}+\vK_\ast\vK\inv(\vSigma-\vK)\vK\inv\vK\T_\ast$\footnote{\normalsize Since a coefficient of e-coordinate is irrelevant to the proof of the lemma, we abbreviate the coefficient of the e-coordinate.}. Similarly, $\vxi(\vrho)=(\vtheta,{\rm vec}(\vTheta)) \in \cT$ is described as
\begin{align*}
    \vtheta &= \vSigma\inv\vmu, \\
    \vTheta &= \vSigma\inv.
\end{align*}
From Lemma~\ref{lemma_woodbury} and Lemma~\ref{lemma_woodbury2} in Appendix~\ref{sec_appendix_woodbury}, we have
\begin{align}
    \vTheta_{\ast\ast}&=(\vK_{\ast\ast}+\vK_\ast\vK\inv(\vTheta\inv-\vK)\vK\inv{\vK_\ast}^T)\inv, \notag \\
    &=\vK_{\ast\ast}\inv-\vK_{\ast\ast}\inv\vK_\ast\vK\inv((\vTheta\inv-\vK)\inv+\vK\inv)\inv\vK\inv{\vK_\ast}^T\vK_{\ast\ast}\inv \notag \\
    &=\vK_{\ast\ast}\inv-\vK_{\ast\ast}\inv\vK_\ast(\vK\inv-\vTheta){\vK_\ast}^T\vK_{\ast\ast}\inv. \label{eq_affine_theta1}
\end{align}
Letting $\vV:=\vK\inv(\vSigma-\vK)\vK\inv$ and $\vV_{\ast\ast}:=\left[\begin{array}{cc}\vV & \vzero \\\vzero & \vzero\end{array}\right]$, $\vtheta_\ast$ is described as follows:
\begin{align*}
    \vtheta_\ast &= \vTheta_{\ast\ast}(\vmu_{\ast 0}+\vK_\ast\vK\inv(\vmu-\vmu_0)) \\
    &= \vTheta_{\ast\ast}(\vmu_{\ast 0}+\vK_\ast\vK\inv(\vmu-\vmu_0-\vK\vV\vmu_0+\vK\vV\vmu_0)) \\
    &= \vTheta_{\ast\ast}((\vI_{\ast\ast}+\vK_{\ast\ast}\vV_{\ast\ast})\vmu_{\ast 0}+\vK_\ast\vK\inv(\vmu-(\vI+\vK\vV)\vmu_0)) \\
    &= \vTheta_{\ast\ast}((\vK_{\ast\ast}+\vK_{\ast\ast}\vV_{\ast\ast}\vK_{\ast\ast})\vK\inv_{\ast\ast}\vmu_{\ast 0}+\vK_\ast\vK\inv(\vmu-(\vK+\vK\vV\vK)\vK\inv\vmu_0))
\end{align*}
Since $\vTheta\inv_{\ast\ast}=\vK_{\ast\ast}+\vK_{\ast\ast}\vV_{\ast\ast}\vK_{\ast\ast}=\vK_{\ast\ast}+\vK_\ast\vV\vK\T_\ast$ and $\vTheta\inv=\vK+\vK\vV\vK\T$, we have
\begin{align*}
    \vtheta_\ast&= \vTheta_{\ast\ast}(\vTheta\inv_{\ast\ast}\vK\inv_{\ast\ast}\vmu_{\ast 0}+\vK_\ast\vK\inv(\vTheta\inv\vtheta-\vTheta\inv\vK\inv\vmu_0)) \\
    &= \vK\inv_{\ast\ast}\vmu_{\ast 0}+\vTheta_{\ast\ast}\vK_\ast\vK\inv\vTheta\inv(\vtheta-\vK\inv\vmu_0).
\end{align*}
By using Lemma~\ref{lemma_woodbury4} in Appendix~\ref{sec_appendix_woodbury}, the following equation holds:
\begin{equation}
    \vtheta_\ast= \vK\inv_{\ast\ast}\vmu_{\ast 0}+\vK\inv_{\ast\ast}\vK_\ast(\vtheta-\vK\inv\vmu_0).
    \label{eq_affine_theta2}
\end{equation}

Since $\vtheta_\ast$ and $\vTheta_{\ast\ast}$ are transformed from $\vtheta$ and $\vTheta$ by Eqs.~\eqref{eq_affine_theta1} and \eqref{eq_affine_theta2}, lemma~\ref{lemma_e_flatness} can be proved.

\subsection{Proof of Lemma~\ref{lemma_equivalent_of_kl_divergence}}
From the definition of $\cT^\ast$, we have $q(\vf_\ast\mid\vrho)=p(\vf_+\mid\vf)p(\vf\mid\vrho)$. Therefore, the KL divergence $\KL{q(\vf_\ast\mid\vrho)}{q(\vf_\ast\mid\vrho')}$ can be decomposed as follows. 
\begin{align}
    \KL{q(\vf_\ast\mid\vrho)}{q(\vf_\ast\mid\vrho')}=&\KL{q(\vf\mid\vrho)}{q(\vf\mid \vrho')} \nonumber \\
    &+\mathbb{E}_{q(\vf\mid\vrho)}[\KL{p(\vf_+\mid\vf)}{p(\vf_+\mid\vf)}]
\label{KL_divergence}
\end{align}
The above equation entails that the second term of Eq.~\eqref{KL_divergence} is zero.
Hence, lemma~\ref{lemma_equivalent_of_kl_divergence} holds.

\subsection{Proof of Lemma~\ref{lemma_equivalent_gppca}}
Since $\cT^\ast$ is dually flat, we can take a dual coordinate system in $\cS^\ast$ such that e-coordinate is decomposed as $\vxi^\ast=({\vxi^\ast\coI}\T,{\vxi^\ast\coII}\T)\T$ and m-coordinate is decomposed as $\vzeta^\ast=({\vzeta^\ast\coI}\T, {\vzeta^\ast\coII}\T)\T$, and $\cT^\ast$ is as a subspace defined in the m-coordinate $\{\vzeta^\ast\mid \vzeta^\ast\coII=\mathbf{0}\}$. 

Let $\cM^\ast_e\subset \cT^\ast$ be the $L$-dimensional e-flat submanifold minimizing Eq.~\eqref{eq_obj_func_GP_PCA_ast} for $P$. Since the case $L=K$ is trivial, we assume $L<K$. Let $\vzeta^\ast_i$ be the m-coordinate of $p(\vf^\ast\mid \vrho_i)\in P$ and $\hat{\vzeta}^\ast_i$ be the m-projection of $p(\vf^\ast\mid \vrho_i)\in P$ onto $\cM^\ast_e$. By using the basis vectors $\vU^\ast=(\vu^\ast_0,\vu^\ast_1,\vu^\ast_2,\ldots,\vu^\ast_I)\T$, $\hat{\vzeta}^\ast_i$ is represented in the e-coordinate as $\hat{\vxi}^\ast_i = (1,{\vw^\ast_i}\T)\vU^\ast$.

The derivative of Eq.~\eqref{eq_obj_func_GP_PCA_ast} with respect to the parameters $\vW^\ast$ and $\vU^\ast$ is given by
\begin{align}
    \frac{\partial E^\ast(\vW^\ast,\vU^\ast)}{\partial \vW^\ast}&=(\hat{\vZeta}^\ast-\vZeta^\ast)\tilde{\vU}^{\ast{\rm T}}, \label{eq:dLdw} \\
    \frac{\partial E^\ast(\vW^\ast,\vU^\ast)}{\partial \vU^\ast}&=\vW^{\ast{\rm T}}({\hat{\vZeta}}^\ast-\vZeta^\ast), \label{eq:dLdu}
\end{align}
where $\tilde{\vU}^\ast=(\vu^\ast_1,\vu^\ast_2,\ldots,\vu^\ast_L)\T$. We consider that $\vu^\ast$ is decomposed as $\vu^\ast=({\vu^\ast\coI}\T,{\vu^\ast\coII}\T)\T$, and let $\tilde{\vU}^\ast\coI$, $\vZeta^\ast\coI$ and $\hat{\vZeta}^\ast\coI$ be matrices of $\{\vu^\ast_{\mathrm{I},l}\}^L_{l=1}$, $\{\vzeta^\ast_{\mathrm{I},i}\}^I_{i=1}$ and $\{\hat{\vzeta}^\ast_{\mathrm{I},i}\}^I_{i=1}$, respectively. Since $\cM^\ast_e$ is a stationary point of the GP-ePCA$(\cS^\ast)$ constrained in $\cT^\ast$, (i.e.,
from Eqs.~\eqref{eq:dLdw} and \eqref{eq:dLdu}), we have
\begin{equation}
    (\hat{\vZeta}^\ast\coI - \vZeta^\ast\coI){\tilde{\vU}^\ast\coI} = \vzero,
\end{equation}
\begin{equation}
    \vW^\ast (\hat{\vZeta}^\ast\coI - \vZeta^\ast\coI) = \vzero
\end{equation}
Further, since $\vZeta^\ast$ and $\hat{\vZeta}^\ast$ are included in $\cM^\ast_e$, it holds that $\hat{\vZeta}^\ast\coII=\vZeta^\ast\coII=\vzero$.
Therefore (\ref{eq:dLdw}) and (\ref{eq:dLdu}) are all zeros.
That means $\cM^\ast_e$ is a stationary point of the GP-ePCA$(\cS^\ast)$ in $\cS^\ast$ as well,
which proves the Lemma.

\subsection{Proof of Lemma~\ref{lemma_m_flatness}}
\begin{proof}
In the case of a Gaussian distribution parameterized $\vmu$ and $\vSigma$, the natural parameters represented using $\vmu$ and $\vSigma$ are as follows:
\begin{align*}
    \veta_\ast &= \vmu_\ast, \\
    \vEta_\ast &= \vSigma_{\ast\ast}+\vmu_\ast\vmu_\ast\T,
\end{align*}
Then, the following equations hold:
\begin{align*}
    \veta_\ast=&\vmu_\ast \\
    =&\vmu_{\ast0}+\vK_\ast\vK\inv(\vmu-\vmu_0) \\
    =&\vmu_{\ast0}+\vK_\ast\vK\inv(\veta-\vmu_0),
\end{align*}
and
\begin{align*}
    \vEta_\ast =& \vK_{\ast\ast}+\vK_\ast\vK\inv(\vSigma-\vK)\vK\inv\vK\T_\ast+\veta_\ast\veta\T_\ast \\
    =& \vK_{\ast\ast}+\vK_\ast\vK\inv(\vEta-\veta\veta\T-\vK)\vK\inv\vK\T_\ast+\veta_\ast\veta\T_\ast. \\
    =& \vK_{\ast\ast}+\vK_\ast\vK\inv(\vEta-\vK)\vK\inv\vK\T_\ast-\vK_\ast\vK\inv\veta\veta\T\vK\inv\vK\T_\ast \notag \\
    &+(\vmu_{\ast0}+\vK_\ast\vK\inv(\veta-\vmu_0))(\vmu_{\ast0}+\vK_\ast\vK\inv(\veta-\vmu_0))\T. \\
    =& \vK_{\ast\ast}+\vK_\ast\vK\inv(\vEta-\vK-\veta\veta\T)\vK\inv\vK\T_\ast \notag \\
    &+\vmu_{\ast0}\vmu_{\ast0}\T+\vmu_{\ast0}(\veta-\vmu_0)\T\vK\inv\vK\T_\ast+\vK_\ast\vK\inv(\veta-\vmu_0)\vmu\T_{\ast0} \notag \\
    &+\vK_\ast\vK\inv(\veta-\vmu_0)(\veta-\vmu_0)\T\vK\inv\vK\T_\ast. \\
    =& \vK_{\ast\ast}+\vK_\ast\vK\inv(\vEta-\vK-\veta\veta\T)\vK\inv\vK\T_\ast \notag \\
    &+\vmu_{\ast0}\vmu_{\ast0}\T+\vmu_{\ast0}(\veta-\vmu_0)\T\vK\inv\vK\T_\ast+\vK_\ast\vK\inv(\veta-\vmu_0)\vmu\T_{\ast0} \notag \\
    &+\vK_\ast\vK\inv(\veta\veta\T-\vmu_0\veta\T-\veta\vmu\T_0+\vmu_0\vmu\T_0)\vK\inv\vK\T_\ast. \\
    =& \vK_{\ast\ast}+\vK_\ast\vK\inv(\vEta-\vK-\vmu_0\veta\T-\veta\vmu\T_0+\vmu_0\vmu\T_0)\vK\inv\vK\T_\ast \notag \\
    &+\vmu_{\ast0}\vmu_{\ast0}\T+\vmu_{\ast0}(\veta-\vmu_0)\T\vK\inv\vK\T_\ast+\vK_\ast\vK\inv(\veta-\vmu_0)\vmu\T_{\ast0}.
\end{align*}
From the above, we show that the Lemma~\ref{lemma_m_flatness} is proved.
\end{proof}

\end{document}